\documentclass{article}

\PassOptionsToPackage{numbers, compress}{natbib}
\usepackage[preprint]{neurips_2026}

\usepackage[utf8]{inputenc} 
\usepackage[T1]{fontenc}    
\usepackage{hyperref}       
\usepackage{url}            
\usepackage{booktabs}       
\usepackage{amsfonts}       
\usepackage{nicefrac}       
\usepackage{microtype}      
\usepackage{xcolor}         
\usepackage{amsmath}
\usepackage{amssymb}
\usepackage{mathtools}
\usepackage{amsthm}
\usepackage[normalem]{ulem}
\usepackage{url}
\usepackage{multirow}
\usepackage[capitalize,noabbrev]{cleveref}
\Crefname{equation}{Eq.}{Eqs.}
\Crefname{section}{Sec.}{Secs.}
\Crefname{figure}{Fig.}{Figs.}
\Crefname{table}{Tab.}{Tabs.}

\usepackage{easyReview}
\usepackage{physics}

\usepackage{wrapfig}
\usepackage{enumitem}
\usepackage{subcaption}
\usepackage{algpseudocodex}
\usepackage{pgfplots}
\pgfplotsset{compat=newest}
\usepackage[most]{tcolorbox}
\usepackage{algorithm}
\usepackage{thm-restate}

\usepackage{dirtytalk}
\theoremstyle{plain}
\newtheorem{theorem}{Theorem}
\theoremstyle{definition}
\newtheorem{definition}{Definition}
\theoremstyle{remark}
\newtheorem{remark}{Remark}
\theoremstyle{proposition}
\newtheorem{proposition}{Proposition}

\Crefname{lemma}{Lemma}{Lemmas}
\Crefname{corollary}{Corollary}{Corollaries}
\Crefname{proposition}{Proposition}{Propositions}

\usepackage[dvipsnames]{xcolor}
\title{Multi-agent decision making: A Blackwell's informativeness approach}

\author{
    Zheng Zhang\textsuperscript{\rm 1},
    Cuong C. Nguyen\textsuperscript{\rm 1},
    Kevin Wells\textsuperscript{\rm 1},
    Gustavo Carneiro\textsuperscript{\rm 1}\thanks{Corresponding authors}\\
    \textsuperscript{\rm 1}Centre for Vision, Speech and Signal Processing, University of Surrey\\
    \texttt{zheng.zhang@surrey.ac.uk}
    }

\begin{document}

\maketitle


\begin{abstract}
    The rapid development of large language models (LLMs) has motivated research on decision-making in multi‑agent systems, where multiple agents collaborate to achieve shared objectives. Existing aggregation approaches, such as voting and debate, are largely ad-hoc and lack formal guarantees regarding the informativeness of the resulting decisions. In this paper, we provide a principled approach to analyse decisions made in the multi‑LLM setting using Blackwell's informativeness framework. Within the Blackwell information-structure abstraction, we show that voting and debate induce information structures that are no more informative than the pooled private information of all agents. This result identifies Bayesian pooled posterior maximisation as an information-theoretic upper-bound decision rule under the Blackwell ordering. Motivated by this theoretical analysis, we introduce a practical method for LLM-based question-answering (QA) tasks that estimates each agent's posterior and approximates the pooled posterior using a  product-of-posteriors estimator. Extensive experiments on six QA benchmarks demonstrate that our approach outperforms state‑of‑the‑art multi-LLM debate and voting methods.
\end{abstract}

\section{Introduction}
\label{sec:introduction}
    Recent advances in large language models (LLMs) have enabled the development of autonomous intelligent agents 
    that can interact
    with their environment, plan
    actions, and make
    decisions with minimal human intervention~\citep{yao2022react}. As individual LLM-based agents~\citep{guo2024large} exhibit domain-specific strengths, such as proficiency in programming~\citep{li2022competition}, biological analysis~\citep{singhal2023large}, or mathematical reasoning~\citep{lewkowycz2022solving}, \emph{multi-agent decision-making} frameworks have emerged to exploit the complementary capabilities of multiple specialised agents~\citep{du2023improving,li202024prd,guo2024large}. 
    In principle, these frameworks coordinate agents' actions and beliefs to integrate heterogeneous expertise and produce more informed and reliable decisions.

    In the context of LLMs, multi-agent decision making is commonly realised through 
    voting and debate~\citep{choi2025debate}.
    \emph{Voting}, particularly majority voting, aggregates 
    agents' independent judgments into a collective decision~\citep{wang2023self}. 
    While computationally efficient and simple, it disregards variation in agent expertise, 
    which may reduce informativeness and lead to suboptimal decisions.
    In contrast, \emph{multi-agent debate} (MAD)~\citep{chan2024chateval} enables agents to iteratively exchange perspectives and update their beliefs through structured, multi-round discussions, with the goal of reaching a more coherent and higher-quality collective decision.
    Despite this promise, recent empirical studies~\citep{choi2025debate} indicate that multi-round debate is often less reliable and less effective than simple voting-based approaches. 
    MAD systems are shown to frequently converge toward majority-held misconceptions rather than correct reasoning~\citep{estornell2024multi}. In homogeneous settings, straightforward ensemble techniques 
    often outperform multi-round debate~\citep{choi2025debate}.
    MAD is also reported to fail to surpass even single-agent baselines~\citep{zhang2025stop}. 
    More broadly, LLMs exhibit substantial performance degradation in multi-turn settings regardless of protocol~\citep{laban2026llms}, suggesting the limitation is not unique to debate.
    These observations motivate studying when multi-LLM systems fail to extract all available information, regardless of protocol.

    In this paper, we analyse multi-agent LLM decision making through Blackwell's informativeness framework~\citep{blackwell1953equivalent}. This perspective allows us to compare voting, debate, and posterior-level aggregation in terms of the information structures. We show that voting and debate are less informative than the agents' joint private information, which serves as the Blackwell-optimal reference rule. Motivated by this analysis, we develop MA-PoP, a practical method for multiple-choice QA that estimates the pooled posterior directly by combining per-agent posteriors. Our contributions are as follows:
    \begin{itemize}[topsep=0pt, itemsep=0pt, leftmargin=2ex]
        \item We provide a Blackwell-based formalisation of multi-agent LLM aggregation, showing how voting and debating can lose information relative to posterior-level access to agents' joint private evidence.
        \item We derive and instantiate MA-PoP as a direct estimator of the pooled posterior, implemented as a product of per-agent posteriors. The estimator is exact under conditionally independent private information and serves as an evidence-accumulation approximation when agents are correlated.
        \item We evaluate MA-PoP across six multiple-choice QA benchmarks, covering both homogeneous and heterogeneous MAD configurations, and compare against state-of-the-art debate~\citep{choi2025debate,cui2025free} and voting-based aggregation methods~\citep{ai2025beyond}.
    \end{itemize}
    Our theoretical claims are made within the Blackwell information-structure abstraction and are scoped to multiple-choice QA settings with fixed candidate answers.
    
\section{Background}
\label{sec:background}
\vspace{-2ex}

    This section provides the background on the Blackwell's informativeness framework~\citep{blackwell1953equivalent, de2018blackwell}. To increase the readability, we use calligraphic letters to denote sets, lowercase letters to represent variables, and Greek letters to indicate functions.

    \begin{definition}[Decision-making problem]
        A decision making problem is denoted as a tuple of \((\mathcal{S}, \mathcal{A}, \varphi, \rho)\), where \(\mathcal{S}\) is the set of possible states, \(\mathcal{A}\) is the set of possible actions, \(\varphi: \mathcal{A} \times \mathcal{S} \to \mathbb{R}\) is a utility function, and \(\rho: \mathcal{S} \to [0, 1]\) is a prior distribution over states (i.e., \(\sum_{s \in \mathcal{S}} \rho(s) = 1\)).
    \end{definition}

    \begin{definition}[Information structure]
        An information structure is a tuple \((\mathcal{D}, \sigma)\), where \(\mathcal{D}\) denotes the space of observed signal, and \(\sigma: \mathcal{S} \to [0, 1]\) represents the conditional distribution: \(\sigma(d | s) = \Pr(d | s)\), with \(d \in \mathcal{D}, s \in \mathcal{S}\).
    \end{definition}

        Given the likelihood \(\sigma(d | s)\) of the observed data \(d \in \mathcal{D}\) and the prior over states \(\rho(s)\), the posterior of the state can be obtained via Bayes' rule as 
        \(\Pr(s | d) = \nicefrac{\sigma(d | s) \rho(s)}{\Pr(d)}\), where \(\Pr(d)\) is the prior distribution of observed data \(d\).

    \begin{definition}[Expected value of information structure]
    \label{def:expected_value_information_structure}
        The expected value of an information structure for the decision maker is defined as the expected value of the utility function under the optimal action over states and observations:
        \begin{equation}
        \scalebox{0.9}{$
            \begin{aligned}[b]
                \omega(\sigma) & = \textstyle \sum_{d \in \mathcal{D}} \Pr(d) \underbrace{\left[ \textstyle \max_{a \in \mathcal{A}} \sum_{s \in \mathcal{S}} \varphi(a, s) \Pr(s | d) \right]}_{\mathclap{\text{optimal utility conditioned on observed data}}} = \textstyle\sum_{d \in \mathcal{D}} \max_{a \in \mathcal{A}} \sum_{s \in \mathcal{S}} \varphi(a, s) \, \sigma(d | s) \, \rho(s).
            \end{aligned}
        $}
            \label{eq:expected_value_information_structure}
        \end{equation}
    \end{definition}

    \vspace{-1ex}
    Intuitively, the \emph{expected value} of an information structure can be interpreted as follows:
    \begin{enumerate}[noitemsep,topsep=0pt,leftmargin=5ex]
        \item for each observation \(d\), select the action to maximise the expectation of the utility w.r.t. \(\Pr(s | d)\),
        \item weight that utility by the probability of observing \(d\),
        \item sum (or integrate) over all observations.
    \end{enumerate}

    \begin{definition}[Garbling]
    \label{def:garbling}
        An information structure \((\mathcal{D}^{\prime}, \sigma^{\prime})\) is a \emph{garbling} of \((\mathcal{D}, \sigma)\), denoted as: \((\mathcal{D}^{\prime}, \sigma^{\prime}) \unlhd (\mathcal{D}, \sigma)\), if there exists a function \(\kappa:\mathcal{D} \times \mathcal{D}^{\prime} \to [0, 1]\) such that: \(\sigma^{\prime}(d^{\prime} | s) = \sum_{d \in \mathcal{D}} \kappa(d, d^{\prime}) \, \sigma( d | s)\) and \(\sum_{d' \in \mathcal{D}'} \kappa(d, d') = 1\).
    \end{definition}

    Intuitively, garbling introduces a stochastic transformation that adds noise to the original signal distribution.  Consequently, the expected utility for any decision problem cannot increase under garbling; instead, it can only stay the same or decrease.

    Here, we state the ``partial'' result (excluding the \emph{feasibility}) of the Blackwell's informativeness theorem as follows:
    \begin{theorem}[``Partial'' Blackwell's informativeness~\citep{blackwell1953equivalent}]
    \label{thrm:blackwell}
        If \((\mathcal{D}, \sigma)\) and \((\mathcal{D}^{\prime}, \sigma^{\prime})\) are two information structures,  the following statements are equivalent:
        \begin{enumerate}[topsep=0ex, itemsep=0ex, leftmargin=5ex]
            \item \((\mathcal{D}', \sigma^{\prime})\) is a garbling of \((\mathcal{D}, \sigma)\): \((\mathcal{D}^{\prime}, \sigma^{\prime}) \unlhd (\mathcal{D}, \sigma)\).
            \item Every Bayesian agent prefers \((\mathcal{D}, \sigma)\) to \((\mathcal{D}^{\prime}, \sigma^{\prime})\) for any \if possible \fi decision problem (i.e., \(\omega(\sigma^{\prime}) \le \omega(\sigma)\)).
        \end{enumerate}
        \vspace{-1ex}
    \end{theorem}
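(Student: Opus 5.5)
The proof has two directions: garbling implies a lower value, and conversely. For (1)\(\Rightarrow\)(2) I will compute directly. Fix any decision problem \((\mathcal{S},\mathcal{A},\varphi,\rho)\) and a garbling kernel \(\kappa\). Substituting \(\sigma'(d'|s)=\sum_d \kappa(d,d')\sigma(d|s)\) into \Cref{eq:expected_value_information_structure} gives
\[
\omega(\sigma') = \sum_{d'} \max_{a} \sum_{d} \kappa(d,d') \sum_{s} \varphi(a,s)\sigma(d|s)\rho(s).
\]
A maximum of a sum is at most the sum of maxima, and \(\kappa\ge 0\). Hence
\[
\omega(\sigma') \le \sum_{d'}\sum_d \kappa(d,d') \max_a \sum_s \varphi(a,s)\sigma(d|s)\rho(s).
\]
Exchanging the sums and using \(\sum_{d'}\kappa(d,d')=1\) yields exactly \(\omega(\sigma)\). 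Equivalently, any decision rule \(a(d')\) under \(\sigma'\) is simulated by the randomized rule under \(\sigma\) that draws \(d'\sim\kappa(d,\cdot)\) and then plays \(a(d')\).

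For (2)\(\Rightarrow\)(1) I will argue by contraposition with a separating hyperplane. I work with finite \(\mathcal{S},\mathcal{D},\mathcal{D}'\) and take \(\mathcal{A}=\mathcal{D}'\). Consider the set
\[
K=\Big\{\big(\textstyle\sum_d \kappa(d,d')\sigma(d|s)\big)_{s,d'} : \kappa \text{ a Markov kernel}\Big\}\subset\mathbb{R}^{\mathcal{S}\times\mathcal{D}'}.
\]
\(K\) is the linear image of a product of simplices, so it is convex and compact. Suppose \(\sigma'\notin K\). Then there is a matrix \(\psi(d',s)\) with
\[
\sum_{s,d'}\psi(d',s)\sigma'(d'|s) > \max_{\tau\in K}\sum_{s,d'}\psi(d',s)\tau(d'|s).
\]
I then build a decision problem with action set \(\mathcal{A}=\mathcal{D}'\), uniform prior \(\rho\), and utility \(\varphi(a,s)=\psi(a,s)/\rho(s)\). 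In this problem the maximum over \(K\) is attained by deterministic kernels, and it equals \(\sum_d\max_a\sum_s\varphi(a,s)\sigma(d|s)\rho(s)=\omega(\sigma)\). The left-hand side is at most \(\omega(\sigma')\), since the identity action rule \(a=d'\) is feasible. This gives \(\omega(\sigma')>\omega(\sigma)\), contradicting (2).

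I expect the main obstacle to be the bookkeeping that identifies the optimum over \(K\) with \(\omega(\sigma)\). The key facts are that a linear objective is maximized over kernels row by row at a vertex, and that the prior can be absorbed into \(\varphi\) as long as \(\rho\) has full support. If \(\mathcal{D}\) or \(\mathcal{D}'\) is infinite, the same argument needs compactness and measurability assumptions. I will therefore restrict attention to finite spaces, consistent with the sums used in the paper's definitions.
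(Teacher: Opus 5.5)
Your proof is correct for finite \(\mathcal{S},\mathcal{D},\mathcal{D}'\). The paper itself contains no proof of this statement to compare against. It quotes the theorem from \citep{blackwell1953equivalent}, with \citep{de2018blackwell} also cited in the background, and its only justification is the informal remark that garbling adds noise, so expected utility cannot increase. That remark is your (1)\(\Rightarrow\)(2) computation. It is also the only direction the paper ever uses: \cref{corollary:optimal_informativeness} and the claim that every Bayesian decision maker prefers \((\mathcal{D}^{M},\sigma_{0})\) both go from garbling to \(\omega(\sigma')\le\omega(\sigma)\). Your converse is the standard separating-hyperplane argument, and I believe it is essentially the one in the cited de Oliveira note. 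Your set \(K\) is exactly the set of state-contingent distributions of \(\mathcal{D}'\)-valued actions attainable under \(\sigma\). So \(\sigma'\in K\) is the ``feasibility'' condition that the paper explicitly drops from its ``partial'' statement, and your proof routes (2)\(\Rightarrow\)(1) through it with \(\mathcal{A}=\mathcal{D}'\). Compared with the paper's bare citation, your version gives a self-contained proof of both directions. The price is that the converse needs finiteness, as you note. It also needs statement (2) to quantify over action sets and utilities and to allow a full-support prior. That is legitimate here, because \(\rho\) is part of the decision problem in the paper's definition; the uniform prior of \cref{sec:problem_statement} would also do. The easy direction, which is all the paper relies on, extends without change to countable signal spaces with bounded utilities, using \(\sup\) in place of \(\max\).
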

    Intuitively, a \say{noisier} information structure results in a lower expected utility. 

\section{Problem statement}
\label{sec:problem_statement}
\vspace{-2ex}

    

    We study multi-agent decision making and classification through the standard Bayesian-game setting~\citep{osborne1994course}, in which each Bayesian agent starts from a non-informative prior over states, \(\rho(s)\), assigning equal probability to all possible outcomes. After receiving private information \(d_m\), each agent updates its belief via Bayesian inference, yielding the posterior \(\Pr(s \mid d_m)\). This abstraction closely resembles the multi-LLM setting, where agents share an uninformative prior before training but acquire distinct private information through their training data or observations. \cref{tab:analogy} maps the decision-theoretic terms used here to their classification counterparts.

    \begin{wraptable}{r}{0.5\textwidth}   
        \vspace{-3ex}
          \centering
           \caption{Mapping table between the terms used in decision-making problem and their interpretation in classification.}
        \label{tab:analogy}
        \resizebox{\linewidth}{!}{        
        \begin{tabular}{l l}
            \toprule
            \bfseries Decision theory & \bfseries Classification \\
            \midrule
            state \(s \in \mathcal{S}\) & true class label \\
            action \(a \in \mathcal{A}\) & predicted label \\
            utility \(\varphi(., .)\) & negative loss function \\
            prior \(\rho(s)\) & class prior distribution \\
            observation \(d \in \mathcal{D}\) & training dataset \\
            info. structure \(\sigma(d | s)\) & class-conditional distribution \\
            \bottomrule
        \end{tabular}}
        \vspace{-2ex}
    \end{wraptable}

    For classification tasks, let \(y \in \mathcal{Y}\) represent label of an instance \(x \in \mathcal{X}\), and \(d\) denote private information (e.g., private training data) observed by an agent. The prediction made by an agent indexed by \(m\), whose private information (or training data) is denoted as \(d_{m}\), for the instance \(x\) can be written as \(\Pr(y | x, d_{m})\). To relate to the decision-making process in \cref{sec:background}, the label of an instance in classification corresponds to the world state in the Blackwell's informativeness setting (i.e., \(\mathcal{S} \equiv \mathcal{Y}\)), the action as the predicted label outcome of the system, and the utility is the negative loss function. Readers are referred to \cref{tab:analogy} for further details.

    The setting considered in this work is the decision-making with multiple agents in classification, where each agent indexed by \(m\) is trained on a private training dataset \(d_{m}\). The objective is to make an informative decision regarding the label \(y\) of a testing instance \(x\). Two prominent approaches are commonly employed in the literature: \emph{(i)} majority voting in which agents independently produce predictions that are subsequently aggregated into a final decision, and \emph{(ii)} debate, where agents exchange their opinions, and update their beliefs accordingly over multiple rounds of interaction.
    
    The objective of this work is to determine whether these two mechanisms, voting and debate, are sufficiently informative under the Blackwell's informativeness framework, or whether an alternative decision-making mechanism may yield a better informative outcome.

\section{Methodology}
\label{sec:method}
\vspace{-2ex}

    This section presents our aggregation result and applies it to classification with multiple LLM agents.

    \subsection{Information aggregation is a garbling}
    \label{sec:information_aggregation}
        We leverage \cref{thrm:blackwell} to formally prove that any aggregation on the private information, denoted as \(g \in \mathcal{G} = \{g: g = \gamma(d_{1}, \dots, d_{M})\}\), is a garbling (or ``noisy'' version) of \((d_{1}, \dots, d_{M})\),
        where \(\gamma: \mathcal{D}^{M} \to \mathcal{Z}\) is an aggregation function and \(\mathcal{Z}\) is an arbitrary set. To simplify the notations, we denote \( d_{1:M}\) as \((d_{1}, \dots, d_{M})\). 
        The following proposition is a direct consequence of the definition of garbling, but it is the key step that lets us instantiate Blackwell's ordering for multi-agent aggregation.
        \begin{proposition}
        \label{lemma:aggregation_garbling}
            If \((\mathcal{G}, \sigma_{\text{g}}(g | s))\) is an information structure derived from the private information observed by each of \(M\) agents with \(\sigma_{\text{g}}(g | s) = \Pr(g | s)\), and \((\mathcal{D}^{M}, \sigma_{0}( d_{1:M} | s))\) is the information structure of original private information with \(\sigma_{0}(d_{1:M} | s) = \Pr(d_{1:M} | s)\), then the following holds:
            \vspace{-1ex}
            \[
                (\mathcal{G}, \sigma_{\text{g}}(g | s)) \unlhd (\mathcal{D}^{M}, \sigma_{0}( d_{1:M} | s)).
            \]
        \end{proposition}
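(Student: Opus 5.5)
The plan is to exhibit the garbling kernel \(\kappa\) of \cref{def:garbling} explicitly. The key structural fact is that \(g = \gamma(d_{1:M})\) is computed from the private information alone. Whether \(\gamma\) is deterministic or randomised (for example, a debate protocol with stochastic LLM sampling), it uses no further access to the state \(s\). I will therefore model the aggregation as a Markov kernel
\[
\kappa(d_{1:M}, g) = \Pr(g \mid d_{1:M}).
\]
In the deterministic case this becomes the indicator \(\kappa(d_{1:M}, g) = \mathbb{1}[g = \gamma(d_{1:M})]\). The modelling assumption to state explicitly is the Markov chain \(s \to d_{1:M} \to g\), i.e.\ \(\Pr(g \mid d_{1:M}, s) = \Pr(g \mid d_{1:M})\). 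This is precisely what ``an aggregation on the private information'' means.

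First, I check that \(\kappa\) is a valid kernel. It takes values in \([0,1]\), and \(\sum_{g \in \mathcal{G}} \kappa(d_{1:M}, g) = 1\) for every \(d_{1:M} \in \mathcal{D}^M\). In the deterministic case this holds because exactly one \(g\) equals \(\gamma(d_{1:M})\).

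Second, I verify the garbling identity by the law of total probability together with the conditional-independence assumption:
\[
\sigma_{\text{g}}(g \mid s) = \sum_{d_{1:M} \in \mathcal{D}^M} \Pr(g \mid d_{1:M}, s)\, \Pr(d_{1:M} \mid s) = \sum_{d_{1:M} \in \mathcal{D}^M} \kappa(d_{1:M}, g)\, \sigma_0(d_{1:M} \mid s).
\]
This is exactly the condition in \cref{def:garbling}, with \(\mathcal{D}\) replaced by \(\mathcal{D}^M\) and \(\mathcal{D}'\) by \(\mathcal{G}\). That yields \((\mathcal{G}, \sigma_{\text{g}}) \unlhd (\mathcal{D}^M, \sigma_0)\).

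The computation itself is routine. The main obstacle is justifying the Markov assumption \(\Pr(g \mid d_{1:M}, s) = \Pr(g \mid d_{1:M})\) for debate. One might worry that agents ``see'' the question across rounds and thereby gain information about \(s\). I would address this by folding the test instance \(x\), and any shared context, into each \(d_m\), or by conditioning on it throughout. Every message exchanged in debate is then a (possibly randomised) function of the agents' private information and shared context. Composing the round-by-round kernels gives a single kernel from \(d_{1:M}\) to the final output, because compositions of Markov kernels are Markov kernels.

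As a remark, I would also note the consequence via \cref{thrm:blackwell}: \(\omega(\sigma_{\text{g}}) \le \omega(\sigma_0)\) for every decision problem. The Bayes rule based on the pooled posterior \(\Pr(s \mid d_{1:M})\) therefore upper-bounds voting and debate.
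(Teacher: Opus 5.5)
Your proposal is correct and follows the paper's route: you exhibit the garbling kernel explicitly and verify \(\sigma_{\text{g}}(g \mid s) = \sum_{d_{1:M}} \kappa(d_{1:M}, g)\,\sigma_0(d_{1:M} \mid s)\) by the sum rule, and the paper's deterministic indicator \(\pmb{1}[\gamma(d'_{1:M}) = \gamma(d_{1:M})]\) is the special case of your kernel. Your version goes slightly further in two ways: indexing \(\kappa\) on \(\mathcal{D}^M \times \mathcal{G}\) makes the normalisation condition in \cref{def:garbling} hold literally, and allowing \(\kappa = \Pr(g \mid d_{1:M})\) under the explicit Markov condition \(s \to d_{1:M} \to g\) covers stochastically sampled debate messages, which the paper's appendix handles only by treating each sampled opinion as a function of \(d_m\).
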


        \vspace{-3ex}
        \begin{proof}
            The proof is based on the definition of probability and the sum rule as follows:
            \begin{equation*}
                \begin{aligned}[b]
                    \sigma_{\text{g}}(g | s) & =  \Pr(g \in \mathcal{G} | s) = \Pr(d_{1:M}' \in \gamma^{-1}(d_{1:M}) | s)  = \textstyle\sum_{d_{1:M}' \in \mathcal{D}^{M}} \kappa(d_{1:M}, d_{1:M}') \sigma_{0}(d_{1:M}' | s),
                \end{aligned}
            \end{equation*}
            where \(\kappa(d_{1:M}, d_{1:M}') = \pmb{1}\left[ \gamma(d_{1:M}') = \gamma(d_{1:M}) \right] \) denotes a valid stochastic mapping function. Hence, according to \cref{def:garbling}, \((\mathcal{G}, \sigma_{\text{g}}(g | s)) \unlhd (\mathcal{D}^{M}, \sigma_{0}( d_{1:M} | s))\).
        \end{proof}

        Intuitively, \cref{lemma:aggregation_garbling} shows that no aggregation mechanism can outperform the pooled information structure of all private information under the Blackwell ordering framework. 
        The novelty of the above theoretical results is not the garbling theorem, but its instantiation in multi-agent decision making.

    \subsection{Decision making in multi-agent classification}
    \label{sec:classification}
        

        In classification with an ensemble of \(M\) agents, aggregation is often performed in the label space, meaning \(\mathcal{Z} \equiv \mathcal{Y}\). For example, in a weighted averaging the output prediction can be calculated as follows:
        \begin{equation}
            g_{\text{weighted}} = \gamma_{\text{weighted}}(d_{1:M}) = \textstyle \sum_{m = 1}^{M} w_{m} \Pr(y | x, d_{m}),
            \label{eq:weighted_average}
        \end{equation}
        where: \(w_{m} \in [0, 1]\) and \(\sum_{m = 1}^{M} w_{m} = 1\). When \(w_{m} = \nicefrac{1}{M}\), this resembles the ensemble learning (also known as single-round-debate voting).

        We employ \cref{lemma:aggregation_garbling} to show that the decision obtained via either aggregated information sources or debate is no more informative under the Blackwell's framework than the decision made from the information source of pooled original private observations.

        \begin{restatable}{proposition}{optimalInformativeness}
        \label{corollary:optimal_informativeness}
            Given \(M\) agents, each observing private information \(d_{m} \in \mathcal{D}, m \in \{1, \dots, M\}\), the prediction probability obtained either through:
            \begin{itemize}[noitemsep,topsep=0pt]
                \item an ensemble of those agents (e.g., weighted average over an ensemble), or
                \item multi-round debate of \(M\) Bayesian agents,
            \end{itemize}
            will result in an information structure that is the garbling of \((\mathcal{D}^{M}, \sigma_{0}( d_{1:M} | y, x))\).
        \end{restatable}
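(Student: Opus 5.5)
The plan is to reduce both cases to \cref{lemma:aggregation_garbling}. For each mechanism I will exhibit the output (the final prediction vector, or the final label) as $g = \gamma(d_{1:M})$ for some aggregation function $\gamma: \mathcal{D}^{M} \to \mathcal{Z}$. Everything is conditioned on the test instance $x$, which is held fixed throughout. The state is then $y$ and the reference structure is $\sigma_{0}(d_{1:M} \mid y, x)$. Once such a $\gamma$ is in hand, \cref{lemma:aggregation_garbling}, applied with $s \equiv y$ and all probabilities conditioned on $x$, gives the garbling relation directly.

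\emph{Ensemble case.} Each agent's prediction $\Pr(y \mid x, d_{m})$ is obtained from its private $d_{m}$ by Bayes' rule from the common prior $\rho$. It is therefore a deterministic function $\pi_{m}(d_{m})$ with values in the simplex $\Delta(\mathcal{Y})$. The weighted average in \cref{eq:weighted_average} is
\[
\gamma_{\text{weighted}}(d_{1:M}) = \textstyle\sum_{m} w_{m}\, \pi_{m}(d_{m}),
\]
which is a deterministic map from $\mathcal{D}^{M}$ into $\Delta(\mathcal{Y})$. The same holds for any thresholding of it, such as majority vote or argmax. Hence \cref{lemma:aggregation_garbling} applies immediately.

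\emph{Debate case.} I will define the debate inductively. At round $t=0$ each agent $m$ has posterior $q_{m}^{(0)} = \Pr(y \mid x, d_{m})$. At round $t+1$ agent $m$ announces a message $e_{m}^{(t)}$ that is a function of $q_{m}^{(t)}$ (for example, the posterior itself or its argmax). It then updates by Bayesian conditioning on $d_{m}$ together with the public history $h^{(t)} = (e_{1:M}^{(0)}, \dots, e_{1:M}^{(t)})$, giving $q_{m}^{(t+1)} = \Pr(y \mid x, d_{m}, h^{(t)})$. By induction on $t$, each $e_{m}^{(t)}$ and each $q_{m}^{(t+1)}$ is a deterministic function of $d_{1:M}$. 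The base case is the ensemble argument. For the inductive step, the history is a function of $d_{1:M}$ by hypothesis, and Bayesian conditioning on a fixed realisation is a deterministic function of the conditioning variables. After $T$ rounds, the final output, whether it is a vote over the $q_{m}^{(T)}$ or a designated judge's posterior, is therefore $\gamma_{\text{debate}}(d_{1:M})$. Invoking \cref{lemma:aggregation_garbling} finishes the proof.

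\emph{Main obstacle.} The hard part is randomness in the debate. LLM messages may be sampled, and an agent may not be a perfect Bayesian, so the output need not be a deterministic function of $d_{1:M}$. I would handle this by letting the output depend on an auxiliary randomness $\xi$ that is independent of $y$ given $x$. The map is then $\gamma(d_{1:M}, \xi)$, and the kernel
\[
\kappa(d_{1:M}, g) = \Pr_{\xi}\bigl(\gamma(d_{1:M},\xi) = g\bigr)
\]
is a valid stochastic map in the sense of \cref{def:garbling}, with rows summing to one. This satisfies the definition of garbling directly without the deterministic form of the proposition. The key point to verify is that $\xi$ carries no information about $y$ beyond $d_{1:M}$, i.e.\ $\xi \perp y \mid d_{1:M}, x$. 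This is exactly what the Blackwell abstraction assumes when all agent-relevant evidence is included in $d_{1:M}$.
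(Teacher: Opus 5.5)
Your proposal is correct and follows essentially the same route as the paper. The ensemble output is placed in $\mathcal{G}$, and for debate an induction over rounds writes each agent's updated posterior, and finally the consensus, as a function $\gamma(d_{1:M})$, so the aggregation-is-garbling proposition applies. Your stochastic kernel $\kappa(d_{1:M},g)=\Pr_{\xi}(\gamma(d_{1:M},\xi)=g)$ with $\xi \perp y \mid d_{1:M}, x$ tightens a step the paper glosses over: it simply asserts that a sampled opinion $y_m^{(0)} \sim \Pr(y \mid x, d_m)$ is a deterministic function $y_m^{(0)}(d_m)$.
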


        \vspace{-1ex}
        This result holds independently of protocol details, equilibria, debate structure, or learning dynamics. We, therefore, intentionally omit the details of communication protocols or equilibrium formation within multi-agent systems. For our purposes, it suffices to observe that common collective-decision procedures, such as voting, or debate, are instances of information aggregation mappings applied to agents' private signals. Within this simplified framework, the following is an informal proof sketch of the corollary.
        \begin{proof}[Proof sketch]
            For ensemble learning, as shown in \cref{eq:weighted_average}, the aggregated prediction probability \(g_{\text{weighted}}\) belongs to the set of aggregation of private information \(\mathcal{G}\). Hence, according to \cref{thrm:blackwell} and \cref{lemma:aggregation_garbling}, it is a garbling of \((\mathcal{D}^{M}, \sigma_{0}( d_{1:M} | y, x))\).

            The debate can also be proved similarly. 
            A detailed proof is presented in \cref{appendix:proof_optimal_informativeness}.
        \end{proof}

        As a result, in any classification, every Bayesian decision maker always prefers the information source \((\mathcal{D}^{M}, \sigma_{0}( d_{1:M} | y, x))\) to either the information source obtained via label aggregation or debate. The expected utility value of the information source \((\mathcal{D}^{M}, \sigma_{0}( d_{1:M} | y, x))\) can be written as follows:
        \begin{equation}
            \textstyle \omega(\sigma_{0}) = \sum_{d_{1:M} \in \mathcal{D}^{M}} \max_{\hat{y} \in \mathcal{Y}} \varphi(\hat{y}, y) \sigma_{0}(d_{1:M} | y, x) \rho(y | x),
        \end{equation}
        where \(\hat{y}\) denotes the predicted label. Since \(\varphi(\hat{y}, y)\) is the negative-loss function (see the interpretation in \cref{tab:analogy}), its maximal value is: \(\varphi^{*} = \varphi(\hat{y} = y, y)\). The expected utility can, therefore, be rewritten as follows:
        \begin{equation}
            \textstyle \omega(\sigma_{0}) = \sum_{d_{1:M} \in \mathcal{D}^{M}} \varphi^{*} \times \max_{y \in \mathcal{Y}} \sigma_{0}(d_{1:M} | y, x) \rho(y | x).
            \label{eq:expected_utility_classification}
        \end{equation}

        \cref{eq:expected_utility_classification} means that in a multi-agent classification, for any set of private datasets \(d_{1:M} \in \mathcal{D}^{M}\), every Bayesian decision maker prefers having the predicted label as follows:
        \begin{equation}
            \begin{aligned}[b]
                \hat{y}^{*} & = \textstyle \operatorname*{argmax}_{y \in \mathcal{Y}} \sigma_{0}(d_{1:M} | y, x) \rho(y | x) \textstyle = \operatorname*{argmax}_{y \in \mathcal{Y}} \Pr(y, d_{1:M} | x)  \textstyle = \operatorname*{argmax}_{y \in \mathcal{Y}} \Pr(y | x, d_{1:M}).
            \end{aligned}
            \label{eq:predicted_label}
        \end{equation}

        \begin{remark}
            The result in \cref{eq:predicted_label} also helps explain the diminishing returns and eventual performance plateaus observed in several multi-LLM debate settings. In homogeneous ensembles, where agents share pre-training corpora, alignment procedures, and retrieval pipelines, their private knowledge is positively correlated, reducing the diversity of joint private knowledge \(\cap_{ = 1}^{M} d_{m}\), and hence, performance is predictably lower than in heterogeneous configurations as reported in \citep{zhang2025stop}. Conversely, even in heterogeneous pools, increasing the number of agents beyond a modest threshold yields limited gains once their training data cease to contribute novel information, leading to the plateau observed in \cref{tab:four_agents_PoP}. A related saturation effect arises with larger LLMs: because their training corpora overlap substantially, adding more large models does not necessarily increase knowledge diversity. Our findings in \cref{eq:predicted_label} suggest that, for pooled posterior methods, the most effective configuration involves a small number of agents whose private knowledge is as close to conditionally independent as feasible. Ensuring diversity is therefore crucial to realise the theoretical advantages of pooled aggregation in multi-agent decision-making systems.
        \end{remark}

        The result in \cref{eq:predicted_label} is also known as maximum Bayesian pooled posterior. 
        Within this Blackwell information-structure abstraction, voting and debate are less informative than access to the full pooled private information. 
        Next, we present a practical method to approximate the pooled posterior for decision making in multi-agent classification.

    \paragraph{Estimate the pooled posterior}
    If all of the private information observed by each agent, \(d_{m}, m \in \{1, \dots, M\}\) is available, it is straight-forward to obtain \(\Pr(y | x, d_{1:M})\). This is, however, not true in practice because the private information (e.g., private training data) \(d_{m}\) of each agent is often unavailable to other agents \(m^{\prime} \in \{1,\dots,M\} \setminus \{m\}\). Instead, it is only possible to sample the prediction made by each agent's posterior \(\Pr(y | x, d_{m})\). If this sampling process is reasonably cheap, one can repeatedly sample multiple predictions of one agent to approximate its posterior, denoted as \(\Tilde{\Pr}(y | x, d_{m})\). In the following, we assume the availability of \(\Tilde{\Pr}(y | x, d_{m}), m \in \{1, \dots, M\}\), and use these to approximate the pooled posterior \(\Pr(y | x, d_{1:M})\).

    Given the availability of each agent's approximated posterior \(\Tilde{\Pr}(y | x, d_{m})\), the pooled posterior of interest can then be estimated by applying Bayes' rule as follows:
        \begin{equation}
        \begin{aligned}[b]
            \Pr(y | x, d_{1:M}) 
            &\propto \Pr(y|x) \textstyle\prod_{m=1}^{M} \Pr(d_m|y,x) 
             \quad \text{assume } (d_m \perp d_{m'} | y,x,\ m \neq m') \\
            &\propto \Pr(y|x) \textstyle\prod_{m=1}^{M} \frac{\Pr(y|x,d_m)\Pr(d_m)}{\Pr(y|x)} \\
            &\propto [\underbrace{\Pr(y|x)}_{\text{uniform}}]^{1-M} \textstyle\prod_{m=1}^{M} \Pr(y|x,d_m) \\
            &\propto \textstyle\prod_{m=1}^{M} \Pr(y|x,d_m) \approx \prod_{m=1}^{M} \tilde{\Pr}(y|x,d_m).
        \end{aligned}
        \label{eq:PoP}
    \end{equation}
        The result in \cref{eq:PoP} shows that the pooled posterior is approximately proportional to the \emph{product of each agent's posterior} (up to a normalisation constant). We adopt two standard assumptions in the information and LLM aggregation literature~\citep{ai2025beyond, prelec2017solution, schoenebeck2021wisdom, pan2024robust, pettigrew2019aggregating}: \emph{(i)} the conditional independence of agents' private knowledge, and \emph{(ii)} uniform prior \(\Pr(y | x)\). The conditional independence assumption yields a closed-form approximation of the Bayesian posterior pooling and is particularly appropriate in heterogeneous settings where agents acquire information from distinct sources or specialise in different domains (e.g., software engineering versus biology or law). 
        \cref{eq:PoP} should therefore not be read as an unconditional optimality guarantee. The pooled posterior in \cref{eq:predicted_label} is Blackwell-optimal when the joint private information \(d_{1:M}\) is available, whereas the product estimator in \cref{eq:PoP} is exact only under conditional independence of agents' private information. We treat this assumption not as a factual claim about modern LLMs, but as an idealised reference point. As agent dependence increases (e.g., overlapping of pre-training and fine-tuning data), the pooled posterior estimated in \cref{eq:PoP} is overconfident. Empirically, this overconfidence does not lead to a catastrophic degradation, but a smooth performance changes as observed in
        \cref{tab:qwen_PoP,tab:llama_PoP,tab:qwen32_PoP}.
        
        The assumption of uniform prior \(\Pr(y | x)\) in \cref{eq:PoP} should be read as the absence of external information about class label distribution. If a prior \(\Pr(y\mid x)\) is available, it can be incorporated directly through the more general form \(\Pr(y\mid x,d_{1:M}) \propto \Pr(y\mid x)^{1-M}\prod_m \Pr(y\mid x,d_m)\).

    \paragraph{Approximate \(\Tilde{\Pr}(y | x, d_{m})\) of LLM agents in multiple-choice questions}
        We employ the similarity of embeddings between the agent's response \(y^{(m)} \sim \Pr(y | x, d_{m})\) and each of \(J\) provided answers (or options) \(y_{o_{j}}, j \in \{1, \dots, J\}\) in a multiple-choice question (e.g., \(y_{o_{j}}\) is a natural language expression as: A-benign, B-malignant or C-no finding as in \cref{appendix:prompt_templates}) to approximate \(\Tilde{\Pr}(y | x, d_{m})\).
        Formally, for each pair of agent's response \(y^{(m)}_{n}\) and a provided list of options, \(y_{o_{j}}\), we calculate the similarity of their embeddings, denoted as \(\tau(y^{(m)}_{n}, y_{o_{j}})\). For example, \(\tau\) can be the dot product or cosine similarity between the two embedding vectors of \(y^{(m)}_{n}\) and \(y_{o_{j}}\). The predictive probability distribution over all provided options can be approximated as follows:
        \begin{equation}
            \textstyle \Tilde{\Pr}(y = y_{{o}_{j}} | x, d_{m}) = \frac{1}{N} \sum_{n = 1}^{N} \nicefrac{\phi \left( \tau(y^{(m)}_{n}, y_{o_{j}}) \right)}{\sum_{j' = 1}^{J} \phi \left( \tau(y^{(m)}_{n}, y_{o_{j'}}) \right)},
            \label{eq:approximate_posterior}
        \end{equation}
        where \(N\) is the number of \say{Monte Carlo} responses for the same question, and the additional model \(\phi(\cdot)\) is introduced to map the similarity score to a predictive logit (e.g., plays a role as Platt scaling~\citep{platt1999probabilistic}). Note that we estimate each agent's posterior from full generated responses by measuring their semantic compatibility with each candidate option using an NLI cross-encoder, followed by calibration. This avoids relying on raw LLM logits, whose token-level probabilities over labels or answer strings can be noisy, length-biased, and misaligned with full-answer semantics; see Appendix~\ref{appendix:logits_embedding_comparison} for details.

        Given the approximation of each agent's posterior in \cref{eq:approximate_posterior}, we can approximate the pooled posterior \(\Pr(y | x, d_{1:M})\) by multiplying these probabilities and normalising to obtain the predicted label (or the selected answer) as shown in \cref{eq:predicted_label}.

        Note that in the conventional multi-class classification, the order of appearance of class labels is fixed. However, in the setting of multiple choice questions, the \say{class labels} or provided options can be freely permutated. Hence, the model \(\phi(\cdot)\) used for calibrating the prediction probabilities in \cref{eq:approximate_posterior} must be permutation-equivariant. 
        In our implementation, we formulate \(\phi\) as a fully-connected neural network following the \emph{Deep Sets} architecture~\citep{zaheer2017deep} to make the prediction permutation-equivariant.

\section{Experiments}
\label{sec:experiment}

\paragraph{Baselines}
We follow the setting in \citep{choi2025debate} to evaluate MA-PoP compared to majority voting and MAD. The MAD approaches include: \emph{Centralised MAD}~\citep{guo2024large}, where a central agent aggregates peer responses and generates the updated response at each round, \emph{Decentralised MAD}~\citep{du2023improving}, where each agent observes all other agents' responses from the previous round, \emph{Sparse MAD}~\citep{li2024improving}--a variant of Decentralised MAD with a sparse communication topology to enhance efficiency, and \emph{Free MAD}~\citep{cui2025free}, where decisions are made by evaluating the entire trajectory rather than the final consensus. The voting-based approach includes: \emph{majority voting}, \emph{Self-Consistency} (SC)~\citep{wang2023self}, which samples \(N\) responses from each agent and selects the most frequent answer across all agents, \emph{Log-linear opinion pooling} (LLP)~\citep{pettigrew2019aggregating}, which relaxes the conditional independence assumption by assigning learnable per-agent weights, and \emph{Inverse Surprising Popularity} (ISP)~\citep{ai2025beyond}, which leverages first-order information (e.g., agent accuracies) and second-order information (e.g., answer correlations) to aggregating LLMs' responses. Each agent is sampled with the same number of responses, \(N\), to fairly compare.

We consider a pool of \(M = 5\) agents for multi-agent methods in our main comparison and will ablate on the effect of \(M\). For single-agent baselines, we average across five independent runs. \texttt{Qwen2.5-7B-Instruct} \citep{yang2025qwen3}, \texttt{Llama-3.1-8B-Instruct} \citep{grattafiori2024llama}, \texttt{Mistral-7B-Instruct-v0.3} \citep{jiang2023mistral}, \texttt{Falcon-H1-7B-Instruct} \citep{falconh1}, and \texttt{Gemma-2-9B-Instruct} \citep{gemma_2024} are used for heterogeneous pool. In the homogeneous setting (i.e., all agents are instances of the same LLM), MA-PoP is simplified to maximising the posterior of a single agent, while in the heterogeneous setting, MA-PoP is carried out following \cref{algorithm:PoP_MAD} in \cref{appendix:algorithm}. We also keep the same number of Monte Carlo samples \(N = 5\) throughout all the experiments. For all MAD baselines, we report results with \(T \in \{2, 3, 5\}\) debate rounds.
We treat the heterogeneous-agent experiments as the primary test of multi-agent information pooling, and the homogeneous-agent experiments as a test of the per-agent posterior estimation pipeline (\cref{eq:approximate_posterior}).

\paragraph{Benchmarks} consist of six natural language question answering tasks, each associated with the following datasets: \emph{(i)} Factual Question Answering (MMLU Professional Medicine and Formal Logics~\citep{hendrycks2020measuring, estornell2024multi}), \emph{(ii)} Natural Language Inference (HellaSwag~\citep{zellers2019hellaswag}), \emph{(iii)} Commonsense Reasoning (CSQA~\citep{talmor2019commonsenseqa}), \emph{(iv)} Alignment Labeling (HH-RLHF~\citep{bai2022training}), where we adopt the ``AI labeller alignment'' practice~\citep{lee2024rlaif}, similarly to~\citep{choi2025debate}, and
\emph{(v)} Medical Entrance Exam (MedMCQA)~\citep{pal2022medmcqa}.
For a fair comparison, all baselines are evaluated on the same data subsets. Readers are referred to \cref{appendix:experimental_details} for further details of the datasets used.

\begin{table*}[t]
\centering
\caption{Prediction accuracy of \(M = 5\) agents in the heterogeneous setting across six QA benchmarks. The results consist of the mean and standard deviations obtained from three different random seeds. The best result per dataset is highlighted in \textbf{bold}.}
\resizebox{\linewidth}{!}{
\begin{tabular}{l cccccc}
\toprule
 & \textbf{\shortstack{MMLU (Pro.Med.)}} & \textbf{\shortstack{MMLU (Form.Log.)}} & \textbf{HellaSwag} & \textbf{CSQA} & \textbf{HH-RLHF} &\textbf{MedMCQA}\\
\midrule
\multicolumn{7}{c}{\textbf{Single-Agent}} \\
\midrule
Qwen-7B  & 0.7868 \(\pm\) .01 & 0.4905 \(\pm\) .03 & 0.7880 \(\pm\) .01 & 0.8153 \(\pm\) .01 & 0.4773 \(\pm\) .01 & 0.5467 \(\pm\) .01\\
Falcon-7B  & 0.7904 \(\pm\) .01  & 0.5873 \(\pm\) .01   & 0.7133 \(\pm\) .01  & 0.8300 \(\pm\) .01  & 0.5033 \(\pm\) .01  & 0.5767 \(\pm\) .01  \\         
Mistral-7B  & 0.6544 \(\pm\) .01  & 0.3730 \(\pm\) .01   & 0.6433 \(\pm\) .01  & 0.6800 \(\pm\) .01  & 0.5567 \(\pm\) .01  & 0.3133 \(\pm\) .01  \\
Llama-8B  & \multicolumn{1}{c}{0.7441 \(\pm\) .01}  & 0.3794 \(\pm\) .02 & 0.6267 \(\pm\) .03 & 0.6767 \(\pm\) .01 & 0.4440 \(\pm\) .02   & 0.5000 \(\pm\) .01        \\ 
Gemma-9B  & \multicolumn{1}{c}{0.8015 \(\pm\) .01}  & 0.5159 \(\pm\) .02 & 0.8267 \(\pm\) .01 & 0.8033 \(\pm\) .01 & 0.5567 \(\pm\) .02     & 0.5567 \(\pm\) .01\\
\midrule
\multicolumn{7}{c}{\textbf{Multi-Agent}} \\
\midrule
Decentr. MAD (T=2) & 0.8603  \(\pm\) .01 & 0.7063  \(\pm\) .01   & 0.8267  \(\pm\) .01  & 0.8500  \(\pm\) .01  & 0.5867  \(\pm\) .01  & 0.6200  \(\pm\) .01        \\
Decentr. MAD (T=3) & 0.8603  \(\pm\) .00  & 0.6904  \(\pm\) .01   & 0.8333  \(\pm\) .01  & 0.8533  \(\pm\) .01  & 0.5867  \(\pm\) .01  & 0.6067  \(\pm\) .01        \\
Decentr. MAD (T=5) & 0.8676  \(\pm\) .00  & 0.6984  \(\pm\) .02   & 0.8400  \(\pm\) .01  & 0.8433  \(\pm\) .00  & 0.5700  \(\pm\) .00  & 0.6000  \(\pm\) .01        \\
Centr. MAD (T=2)   & 0.8372  \(\pm\) .02  & 0.5635  \(\pm\) .03   & 0.7667  \(\pm\) .02  & 0.8233  \(\pm\) .01  & 0.5533  \(\pm\) .01  & 0.5900  \(\pm\) .02            \\
Centr. MAD (T=3)   & 0.8333  \(\pm\) .01  & 0.6032  \(\pm\) .02   & 0.7567  \(\pm\) .01  & 0.8067  \(\pm\) .00  & 0.5567  \(\pm\) .01  & 0.5833  \(\pm\) .02            \\
Centr. MAD (T=5)   & 0.8419  \(\pm\) .01  & 0.6032  \(\pm\) .03   & 0.7633  \(\pm\) .02  & 0.8233  \(\pm\) .01  & 0.5367  \(\pm\) .00  & 0.5967  \(\pm\) .01             \\
Sparse MAD (T=2)        & 0.8603  \(\pm\) .00  & 0.6667  \(\pm\) .02   & 0.8200  \(\pm\) .01  & 0.8633  \(\pm\) .00  & 0.5733  \(\pm\) .00  & 0.6200  \(\pm\) .01        \\
Sparse MAD (T=3)        & 0.8529  \(\pm\) .00  & 0.7143  \(\pm\) .00   & 0.8133  \(\pm\) .00  & 0.8533  \(\pm\) .01  & 0.5533  \(\pm\) .01  & 0.6033  \(\pm\) .01        \\
Sparse MAD (T=5)        & 0.8529  \(\pm\) .01  & 0.7084  \(\pm\) .00   & 0.8133  \(\pm\) .01  & 0.8500  \(\pm\) .00  & 0.5600  \(\pm\) .00  & 0.6167  \(\pm\) .02        \\
Free MAD (T=2)        & 0.8713  \(\pm\) .00   & 0.6111  \(\pm\) .03  & 0.7900  \(\pm\) .02  & 0.8533  \(\pm\) .00  & 0.5500  \(\pm\) .03  & 0.6133  \(\pm\) .02            \\
Free MAD (T=3)        & 0.8713  \(\pm\) .00   & 0.6111  \(\pm\) .04  & 0.7967  \(\pm\) .02  & 0.8567  \(\pm\) .00  & 0.5567  \(\pm\) .02  & 0.6200  \(\pm\) .00            \\
Free MAD (T=5)        & 0.8750  \(\pm\) .01   & 0.6270  \(\pm\) .03  & 0.7933  \(\pm\) .02  & 0.8467  \(\pm\) .01  & 0.5667  \(\pm\) .00  & 0.6267  \(\pm\) .00            \\
\midrule
Majority Voting & 0.8493  \(\pm\) .01  & 0.6190  \(\pm\) .01   & 0.8033  \(\pm\) .01  & 0.8533  \(\pm\) .00  & 0.5633  \(\pm\) .01  & 0.6167  \(\pm\) .01 \\
Self-Consistency & 0.8603 \(\pm\) .01   & 0.7084  \(\pm\) .01  & 0.8233  \(\pm\) .00  & 0.8600  \(\pm\) .00  & 0.5733  \(\pm\) .01  & 0.6167  \(\pm\) .01 \\
LLP & 0.8567  \(\pm\) .02   & 0.7201  \(\pm\) .01  & 0.7833  \(\pm\) .00  & 0.8667  \(\pm\) .00  & 0.5367  \(\pm\) .01  & 0.6321  \(\pm\) .00 \\
ISP & 0.8640  \(\pm\) .00   & 0.6429  \(\pm\) .01  & 0.8267  \(\pm\) .00  & 0.8600  \(\pm\) .00  & 0.5667  \(\pm\) .01  & 0.6367  \(\pm\) .01 \\
\midrule
\textbf{MA-PoP}  & \textbf{0.8787 $\pm$ .01}  & \textbf{0.7367 \(\pm\) .01}   & \textbf{0.8433 \(\pm\) .01}  & \textbf{0.8800 \(\pm\) .00}  & \textbf{0.5900 \(\pm\) .01}  & \textbf{0.6467 \(\pm\) .01}       \\
\bottomrule
\end{tabular}
}
\label{tab:five_agents_PoP}
\vspace{-2ex}
\end{table*}

\paragraph{Results on standard MAD benchmarks} are shown in
\cref{tab:five_agents_PoP} with five different LLMs across five benchmarks, while \cref{fig:comparison} in \cref{appendix:performance_gap} shows the gap among the best single model/ SOTA MAD, majority voting, Self-Consistency, LLP, ISP and MA-PoP.
Empirical results demonstrate that multi-agent methods consistently improve performance over single-agent baselines across most datasets. However, the degree of improvement varies significantly by task, highlighting the sensitivity of these methods to the underlying task complexity.

MA-PoP achieves the highest performance across all evaluated tasks. Although majority voting has strong performance in homogeneous agent settings~\citep{choi2025debate}, it is not on par with MAD methods in heterogeneous settings and sometimes even underperforms individual agents (e.g., see HellaSwag in \cref{tab:five_agents_PoP}). Furthermore, other debate methods show inconsistent performance across different numbers of rounds (\(T=2, 3, 5\)). On some datasets like HellaSwag and HH-RLHF, additional debate rounds provide marginal gains or even degrades performance compared to naive voting. These findings empirically validates recent critiques~\citep{choi2025debate, kaesberg2025voting} regarding the diminishing returns of iterative debate compared to voting (or single-turn ensembles). Notably, on HellaSwag, where traditional debate methods often underperform the best single agent, MA-PoP successfully leverages complementary knowledge of multiple agents, exceeding single-agent performance. This demonstrates that the limitations of conversational debate stem not from multi-agent collaboration itself, but from suboptimal aggregation strategies.

Across different debate rounds, traditional MAD methods show diminishing or even negative returns as the debate progresses. 
This is consistent with the broader finding of~\citep{laban2026llms} that LLMs exhibit significantly lower performance in multi-turn conversations than single-turn. Multi-turn debate inherits the same conversational-degradation pathologies.
In contrast, MA-PoP achieves its superior results in a single turn by directly aggregating the initial probabilistic beliefs, providing a more computationally efficient and stable alternative to iterative communication.

The LLP baseline provides a dependence-aware reference point: under conditional independence, MA-PoP corresponds to a log-linear pool with all per-agent weights fixed to one, whereas LLP learns these weights from a small validation set (about 300 pooled examples, see \cref{appendix:experimental_details}). MA-PoP's gains over LLP should not be read as evidence that conditional independence holds; rather, in the low-supervision regime considered here, fixed unit weights provide a useful inductive bias relative to learning global per-agent weights from limited data.

\paragraph{Heterogeneous and Homogeneous Pool of LLMs}
To assess robustness to dependence, we evaluate MA-PoP across a spectrum of agent-diversity regimes: heterogeneous agents, reduced heterogeneous pools, homogeneous agents, and larger models with likely overlapping training data.
\cref{tab:four_agents_PoP} demonstrates that MA-PoP maintains strong performance even with reduced agent diversity, demonstrating its robustness to ensemble size (see \cref{appendix:num_heterogeneous_agents}). However, we also observe that increasing the number of agents beyond a certain threshold does not yield proportional performance gains across all datasets. This limitation stems from the conditional independence assumption on agents' private training data imposed in \cref{eq:PoP}. While introducing additional agents can initially enrich the collective knowledge, thereby improving performance, this effect diminishes once the pool of agents begins to contribute overlapping information rather than genuinely novel evidence. Beyond this point, adding more agents no longer enhances informativeness and instead leads to a performance plateau.

\cref{tab:two_agents_PoP} in \cref{appendix:num_heterogeneous_agents} presents results at the extreme setting consisting of only 2 agents (\texttt{Qwen-7B} and \texttt{Llama-8B}). This is noteworthy because majority voting fundamentally fails when two agents disagree. In that case, majority voting randomly selects one out of the two answers, resulting in an unreliable decision. In contrast, MA-PoP can, however, leverage the strength of each agent's posterior distribution to make informed decisions without the need of consensus.
Empirically, MA-PoP performs competitively and achieves the best results on most benchmarks in this setting, suggesting that posterior-level aggregation can remain useful even when majority voting becomes unreliable.

Following \citep{choi2025debate}, \cref{tab:llama_PoP,tab:qwen_PoP} in \cref{appendix:homogeneous_agents} evaluate MA-PoP in the homogeneous setting (e.g., all 5 agents are instances of the same LLM). In this setting, the pooled posterior in MA-PoP is simplified into a single posterior because all these agents are the same. 
Nevertheless, MA-PoP still outperforms both single-agent baselines and traditional MAD methods.
Notably, Centralised MAD exhibits severe degradation with additional rounds of debate, highlighting the risks of iterative debate when agents share the same model and may amplify shared biases~\citep{zhang2025stop}.
Across these heterogeneous and homogeneous regimes, performance gains decrease as redundancy increases, but do not collapse.

\paragraph{Larger and more capable models}
We extend the evaluation by employing LLMs with larger capacity. Specifically, for the homogeneous setting, \texttt{Qwen2.5-32B-Instruct}~\citep{yang2024qwen25} is used as a base model. As shown in~\cref{tab:qwen32_PoP} (see \cref{appendix:larger_models}), the performance of majority voting remains comparable to that of multi-agent methods. For the heterogeneous setting, \texttt{Qwen2.5-32B-Instruct}~\citep{yang2025qwen3}, \texttt{Falcon-H1-34B-Instruct}~\citep{falconh1}, and \texttt{Gemma-2-27B-Instruct}~\citep{gemma_2024} are used for 3-agent debate. The results in \cref{tab:diff3_PoP} (see \cref{appendix:larger_models}) for Centralised MAD agree with our previous evaluation in \cref{tab:five_agents_PoP,tab:four_agents_PoP}, in which Centralised MAD cannot improve performance and is even worse than single models and majority voting. Other MAD methods perform better than single agent but are inferior compared to our proposed method MA-PoP.

\paragraph{Calibration of MA-PoP}
We evaluate calibration quality by measuring Expected Calibration Error (ECE) and Maximum Calibration Error (MCE) in \cref{tab:calibration}, with corresponding reliability diagrams shown in \cref{fig:reliability}.  We test four models 
(\texttt{Qwen-7B}, \texttt{Falcon-7B}, \texttt{Gemma-9B}, and \texttt{Falcon-34B}) on the MedMCQA dataset (see \cref{appendix:calibration}), comparing performance with and without our calibration module. Results demonstrate that our method achieves substantial calibration improvements across different model scales, producing well-calibrated probability estimates that enable effective multi-agent consensus mechanisms. 

\paragraph{Ablation of MA-PoP}

We isolate each component of MA-PoP through three ablations against the full method (\cref{tab:ablation_nli} in \cref{appendix:ablation}). (1) \emph{w/o PoP} replaces cross-agent pooling with a single best agent's calibrated posterior, averaged over \(N=5\) Monte Carlo samples (\cref{eq:approximate_posterior}). 
The results show that calibration alone yields part of the gain over the single-agent baseline, while cross-agent pooling provides additional improvements across most benchmarks.
(2) \emph{w/o calibration} replaces the Deep Sets with softmax-normalised NLI logits. Multiplying these uncalibrated scores yields only marginal gains, which confirms that well-formed probabilities are a prerequisite for multiplicative aggregation. (3) \emph{Temperature scaling} replaces the Deep Sets with a single learned scalar temperature~\citep{guo2017calibration,platt1999probabilistic}. This improves over no calibration on some benchmarks (e.g., MMLU Pro.\ Med., CSQA) but underperforms on others (HellaSwag, MMLU Formal Logic) and consistently falls short of the MA-PoP, suggesting that a scalar temperature is insufficient to capture the permutation-equivariant structure of multiple-choice options. Overall, these results indicate that pooling and calibration are complementary, with pooling providing the larger contribution and calibration being necessary to realise it.

\paragraph{Number of Monte Carlo samples} We analyse the impact of the number of Monte Carlo samples, denoted by \(N\), on the quality of posterior estimation in MA-PoP. The results in \cref{fig:numofsamples} (see \cref{appendix:num_samples}) for the homogeneous setting show substantial improvement when increasing from \(N = 1\) to \(N = 5\) samples, with performance stabilising beyond \(N \ge 5\). This demonstrates that posterior estimates converge with additional samples, though with diminishing marginal returns, suggesting \(N = 5\) provides a good balance between estimation quality and computational cost.

\paragraph{Efficiency} We evaluate computational cost by measuring the input, output, and total number of tokens consumed per sample across all methods on the MedMCQA dataset in the 5-agent heterogeneous setting. \cref{tab:token_comparison} in \cref{appendix:computational_efficiency_analysis} reports token usage accumulated across all debate rounds. MA-PoP achieves token efficiency comparable to single-round voting baselines and substantially lower than MAD methods, particularly at higher debate rounds. The additional NLI cross-encoder scoring step required by MA-PoP runs in under one second per sample on a single GPU and is negligible relative to LLM generation cost, which is about 25 seconds.

\section{Related work}
\label{sec:related_work}

\paragraph{Multi-agent debate (MAD)} enables agents to iteratively exchange arguments and critiques over multiple rounds, with the aim of improving reasoning through deliberation and mutual correction~\citep{irving2018ai, du2023improving, liu2024groupdebate}. While debate has shown potential for exposing diverse perspectives and correcting individual errors~\citep{michael2023debate, khan2024debating}, recent empirical studies highlight significant limitations: multi-round debate often fails to outperform strong single-agent baselines and can be less reliable than simple majority voting~\citep{zhang2025stop, choi2025debate}. Free-MAD~\citep{cui2025free} proposes a consensus-free approach where decisions are made by evaluating the entire debate trajectory rather than the final consensus. Within the Blackwell information-structure abstraction, existing voting-based and debate-oriented approaches are less informative than Bayesian pooled posterior, as formalised in \cref{corollary:optimal_informativeness}. In contrast, we ground multi-agent decision making in Blackwell's informativeness framework to generate decisions with informative-theoretic guarantees.

\paragraph{Opinion Pooling and Voting} 
Opinion pooling concerns the aggregation of probabilistic beliefs from multiple experts into a single opinion~\citep{stone1961opinion}. Early work includes the DeGroot~\citep{degroot1974reaching}, in which agents iteratively update their beliefs via weighted linear combinations, corresponding to linear opinion pooling (e.g., \cref{eq:weighted_average}). Beyond linear pooling, log-linear pooling  (LLP) has been proposed~\citep{genest1986combining} to account for dependencies among agents' beliefs. LLP assigns each agent \(m\) a non-negative weight \(w_{m} \ge 0\), yielding a pooled belief of the form \(\prod_{m = 1}^{M} [\Pr(y | x, d_{m})]^{w_{m}}\). Log-linear pooling has also been studied in the context of incoherent opinion pooling through renormalisation procedures~\citep{pettigrew2019aggregating}. 

Voting in an ensemble of LLMs instantiates linear opinion pooling, with majority voting as the most common aggregation rule. To reduce uncertainty, \emph{self-consistency} selects the most frequent answer across multiple sampled prompts or reasoning paths from a single model~\citep{wang2023self, wang2024boosting}. Beyond simple voting, more advanced schemes include mixture-of-experts~\citep{shazeer2017outrageously}, pairwise ranking-based aggregation~\citep{jiang2023llm}, confidence-weighted voting~\citep{xiong2024can}, and hybrids that combine output agreement with internal representation consistency~\citep{jiang2025representation}. Recent work such as \emph{Inverse Surprising Popularity}~\citep{ai2025beyond} further improves aggregation by leveraging agent accuracies (i.e., first-order) and inter-answer (i.e, second-order) correlations.

\cref{sec:extended_related_work} provides further related work, including multi-agent decision-making, LLM uncertainty estimation and \emph{product of experts}~\citep{hinton2002training}.

\section{Discussion and conclusion}
\label{sec:discussion}

In this paper, we study the information generated by voting and debate in multi-agent decision-making. We show that, under Blackwell's informativeness framework, decisions based on these sources are provably no more informative than those based on pooled information. 
This insight motivates Bayesian pooled-posterior maximisation as an information-theoretic reference rule. MA-PoP provides a practical approximation to this rule by multiplying estimated individual posteriors, a form that is exact under conditional independence and appropriate when agents contribute diverse, weakly correlated evidence.

Building on this, we propose MA-PoP, a practical method for approximating agent posteriors in LLM-based settings. MA-PoP consistently outperforms prior state-of-the-art methods on standard debate benchmarks.
A key limitation of this work is that MA-PoP’s product estimator relies on conditional independence of agents’ private information; in realistic LLM deployments, overlapping pre-training, fine-tuning, and toolchains can violate this assumption and lead to overconfident pooled posteriors, as discussed in Appendix~\ref{appendix:limitations}.
In fact, the empirical gains of MA-PoP decrease as agent redundancy increases, but the degradation is gradual, suggesting that the method remains useful as an approximation when independence is only partially satisfied.

\bibliography{example_paper}
\bibliographystyle{unsrtnat}
\newpage
\appendix

\clearpage
\section{Proof of \cref{corollary:optimal_informativeness}}
\label{appendix:proof_optimal_informativeness}
    \optimalInformativeness*
    \begin{proof}
        We prove that the consensus obtained via multi-agent debate is an aggregation or pooling of the private information \(\mathcal{G}\). The proof is carried out by induction.

        In the first round (\(T = 1\)), each agent \(m\) draws an opinion \(y_{m}^{(0)}\) from its probabilistic belief \(\Pr(y | x, d_{m})\) and share that to other agents. The opinion, \(y_{m}^{(0)}\) can also be viewed as a function of the private knowledge \(d_{m}\) as follows:
        \begin{equation}
            y_{m}^{(0)} \sim \Pr(y | x, d_{m}) \implies y_{m}^{(0)} = y_{m}^{(0)} (d_{m}).
        \end{equation}

        After the first round, each agent observes all samples of belief shared by other agents, and update its belief using Bayes's rule as follows:
        \begin{equation}
            \operatorname{Pr}^{(1)} \left( y \left\vert x, d_{m}, \prod_{m' = 1, m' \neq m}^{M} y_{m'}^{(0)}(d_{m'}) \right. \right) \propto \Pr(y | x, d_{m}) \prod_{m' = 1, m' \neq m}^{M} \Pr( \left. y_{m'}^{(0)}(d_{m'}) \right\vert x, y, d_{m}).
        \end{equation}
        Because each opinion \(y_{m}^{(0)}\) is a function of each private information \(d_{m}\), we can represent the posterior under the ``lumped'' function form as follows:
        \begin{equation}
            \operatorname{Pr}^{(1)} \left( y \left\vert x, \gamma_{m}^{(0)}(d_{1:M}) \right. \right),
        \end{equation}
        where \(\gamma_{m}^{(0)}\) is a function.

        In the second round \(T = 2\), we follow the similar procedure and show that the posterior of an agents is as follows:
        \begin{equation}
            \operatorname{Pr}^{(2)} \left( y \left\vert x, d_{m}, \prod_{m' = 1, m' \neq m}^{M} y_{m'}^{(1)}(d_{m'}) \right. \right) = \operatorname{Pr}^{(2)} \left( y \left\vert x, \gamma_{m}^{(1)}(d_{1:M})  \right. \right),
        \end{equation}
        where \(\gamma_{m}^{(1)}\) is a function at round \(T = 2\).

        At the consensus, the posterior of all agents is denoted as: \(\Pr(y | x, \gamma_{\text{con.}}(d_{1:M}))\). The information structure corresponds to this posterior can be written as follows:
        \begin{equation}
            \Pr( \gamma_{\text{con.}}(d_{1:M}) | x, y) \propto \Pr(y | x, \gamma_{\text{con.}}(d_{1:M})) \Pr(\gamma_{\text{con.}}(d_{1:M}) | x).
        \end{equation}

        The information structure \((\mathcal{G}_{\text{con.}}, \Pr( g | x, y))\) with \(\mathcal{G}_{\text{con.}} = \{ g: g = \gamma_{\text{con.}}(d_{1:M}) \}\) is obviously an aggregation of the joint of original private information \(d_{1:M}\). Hence, according to \cref{lemma:aggregation_garbling}, \((\mathcal{G}_{\text{con.}}, \Pr( g | x, y)) \unlhd (\mathcal{D}^{M}, \Pr(d_{1:M} | y, x)\). Applying \cref{thrm:blackwell} on this result completes the proof.
    \end{proof}

\clearpage
\section{Logits vs. Semantic Embedding for posterior probability estimation}
\label{appendix:logits_embedding_comparison}
While using LLM logits directly is an intuitive alternative, there are two limitations that make it not robust enough for reliable posterior estimation in multi-agent setting:

\textbf{Multi-Token Label Ambiguity.} LLM logits correspond to the next-token probability conditioned on the prompt, not the probability of an entire candidate answer. When candidate labels span multiple tokens, the first-token logit is only a partial signal. For example, in HellaSwag, given the context "A man is sitting on a roof. he", the candidate answers are ["is using wrap to wrap a pair of skis.", "is ripping level tiles off.", "is holding a rubik's cube.", "starts pulling up roofing on a roof."]. Computing the full joint probability would require autoregressive decoding over every candidate, which is computationally expensive. It will also require to normalise probabilities by answer length, but this does not provide true probabilities (it effectively measures average per-token probability) and can bias results toward longer answers that contain more common (i.e., higher-probability) words.

\textbf{Signal vs. Noise.} Even for single-token labels (e.g., A/B/C/D in multiple-choice), the LLM's probability mass is distributed across the entire vocabulary, not just the valid candidates. Consider a typical scenario where the first-token distribution assigns overwhelming mass to a task-irrelevant token: the probability of ``The'', ``answer'', ``is'', ``A'', ``B'', ``C'', ``D''] is [0.9, 0.03, 0.03, 0.01, 0.01, 0.01, 0.01]. After masking to valid candidates and renormalising, we obtain a uniform distribution [0.25, 0.25, 0.25, 0.25] that has lost all discriminative signal. LLMs frequently allocate dominant probability mass to high-frequency tokens that are irrelevant to the task, leaving the valid-label logits in a low-signal regime where noise dominates.

Instead, our method captures the semantics of the complete answer rather than relying on a single token and operates in a continuous representation space, which is more robust than using logits directly.

\clearpage
\section{Experimental Details}
\label{appendix:experimental_details}

\subsection{Hyperparameters}

Following~\citep{choi2025debate}, we set the sampling temperature to 1.0, and use nucleus sampling probability of 0.9 to enable stochastic sampling from homogeneous agents. The maximum tokens for all models is 512.

\subsection{Dataset}

\textbf{MMLU}~\citep{hendrycks2020measuring} consists of 13, 869 multiple-choice questions (\citet{estornell2024multi} use the 3,406 high-school-level questions). MMLU (Formal Logic) is designed to evaluate a model’s proficiency in formal reasoning, symbolic manipulation, and logical analysis. Following \citep{choi2025debate}, we use the entire test split comprised of 126 question for evaluation. MMLU (Professional Medicine) is a benchmark designed to evaluate professional-level reasoning in medical domains. It requires knowledge of medical concepts, clinical reasoning, and biomedical science to answer its questions. Following \citep{choi2025debate,choi2025measuring}, we use the full test split, which contains 272 items.

\textbf{HellaSwag}~\citep{zellers2019hellaswag} is a natural language inference (NLI) benchmark dataset focused on sentence completion. It evaluates whether a model can select the most plausible continuation of a given context from multiple candidates, a task requiring both linguistic competence and commonsense reasoning. From the original test split, We follows the setting in \citep{choi2025measuring} which randomly sample 300 questions for our evaluations.

\textbf{CommonsenseQA}~\citep{talmor2019commonsenseqa} is a multiple-choice question answering dataset designed to evaluate a model’s ability to apply commonsense knowledge in natural language understanding. Following \citep{choi2025debate}, we randomly sample 300 questions from the original test split.

\textbf{HH-RLHF}~\citep{bai2022training} is a collection of human-annotated data designed to train and evaluate language models for alignment with human preferences, focusing on helpfulness and harmlessness. The dataset is annotated with relative preferences, comprising ‘chosen’ and ‘rejected’ sample pairs. Similar to the “AI labeler alignment” practice~\citep{lee2024rlaif}, \citet{choi2025debate} ask the LLM agent to select the example that is more helpful and less harmful. To avoid selection bias, randomly shuffle the order of “chosen" and “rejected" in the input prompt. We follows \citep{choi2025debate} and use a random subset of 300 pairs from the original test split.

\textbf{MedMCQA}~\citep{pal2022medmcqa} is a large-scale, Multiple-Choice Question Answering (MCQA) dataset designed to address real-world medical entrance exam questions. We randomly sample 300 questions from the original test split.



\paragraph{Implementation details}
All methods are implemented in PyTorch~\citep{paszke2019pytorch} and run on NVIDIA GPU RTX A6000 and A100. For permutation-equivariant model \(\Phi\) in \cref{eq:approximate_posterior}, we select a Cross-Encoder Natural Language Inference model \texttt{nli-deberta-v3-large}~\citep{reimers2019sentence} and use the \emph{entailment} logit as the similarity score. 

Because the amount of available training data varies across datasets, we pool samples from all sources to train a single permutation-equivariant model. Specifically, we collect 14 samples from the MMLU Formal Logic validation set, 31 samples from the MMLU Professional Medicine validation set, and 50 samples each from the training or validation splits of the remaining datasets, without any overlapping to the testing sets. All inputs are padded to length 5, the maximum number of options across datasets. The calibration model \(\Phi\) in \cref{eq:approximate_posterior} is implemented as a Deep Sets architecture~\citep{zaheer2017deep}, a fully-connected network with two hidden layers of 32 and 64 units respectively, each followed by a ReLU activation. It is trained for 100 epochs with cross-entropy loss using SGD (momentum of 0.9, and weight decay of \(5\times10^{-4}\)), a batch size of 256, and an initial learning rate of 0.01. The log-linear opinion pooling (LLP) weights and temperature scaling in \cref{tab:ablation_nli} are trained with a two-layer MLP using Adam (learning rate 0.01) and cross-entropy loss on the same training set as the permutation-equivariant model. For a fair comparison, LLP shares the calibration head with MA-PoP.

\newpage
\section{Prompt Templates}
\label{appendix:prompt_templates}
\subsection{MAD Templates}

Following setting in \citep{choi2025debate}, we provide the prompt template for multi-agent debate.
\begin{center}
\begin{tcolorbox}[title={Prompts for MAD}]
{
These are the recent opinions from other agents: \\
One of the agents’ response: \\
\texttt{<agent 2’s response from the previous round>} \\
One of the agents’ response: \\
\texttt{<agent 3’s response from the previous round>} \\
This was your most recent opinion:
\texttt{<agent 1’s response from the previous round>} \\
Use these opinions carefully as additional advice to revise your recent opinion to give your final answer to the question: \\
\texttt{<question>} \\
Make sure to state your final answer in curly brackets at the very end of your response, just like: \texttt{"\{final answer: (A)\}"}
}
\label{appendix:prompt_mad}
\end{tcolorbox}
\end{center}

\subsection{Task Templates}

Following setting in \citep{choi2025debate}, we provide the exact input format used for each dataset. These templates correspond to the \texttt{<question>} field in the MAD prompt structure. 

\begin{center}
\begin{tcolorbox}[title={MMLU Professional Medicine and Formal Logic}]
{
\texttt{<question>} \\
(A) \texttt{<option 1>} \\
(B) \texttt{<option 2>} \\
(C) \texttt{<option 3>} \\
(D) \texttt{<option 4>} 
}
\label{appendix:prompt_mmlu}
\end{tcolorbox}
\end{center}

\begin{center}
\begin{tcolorbox}[title={HellaSwag}]
{
Can you choose the option that best follows:
"\texttt{<context>}"? \\
(A) \texttt{<option 1>} \\
(B) \texttt{<option 2>} \\
(C) \texttt{<option 3>} \\
(D) \texttt{<option 4>} 
}
\label{appendix:prompt_hellaswag}
\end{tcolorbox}
\end{center}

\begin{center}
\begin{tcolorbox}[title={CommonsenseQA}]
{
\texttt{<question>} \\
(A) \texttt{<option 1>} \\
(B) \texttt{<option 2>} \\
(C) \texttt{<option 3>} \\
(D) \texttt{<option 4>} \\
(E) \texttt{<option 5>} \\
}
\label{appendix:prompt_csqa}
\end{tcolorbox}
\end{center}

\begin{center}
\begin{tcolorbox}[title={HH-RLHF}]
{
Which of the following is relatively more helpful and/or harmless data? \\
(A) \texttt{<chosen>} or \texttt{<rejection>}\\
(B) \texttt{<rejection>} or \texttt{<chosen>}
}
\label{appendix:prompt_hhrllf}
\end{tcolorbox}
\end{center}

One example of HellaSwag dataset is as below:

\begin{center}
\begin{tcolorbox}[title={Example of HellaSwag}]
{
Can you choose the option that best follows:
"\texttt{A huge crowd is in the stands in an arena. A man throws a javelin. Photographers take pictures in the background. several men}"? \\
(A) \texttt{are water boarding in a river.} \\
(B) \texttt{are shown throwing balls.} \\
(C) \texttt{challenge the man to jump onto the rope.} \\
(D) \texttt{run to where the javelin lands.} 
}
\label{appendix:prompt_hellaswag_example}
\end{tcolorbox}
\end{center}

\subsection{Embedding Option Templates}

We provide the exact option format used for CommonsenseQA dataset for similarity estimation. The score is calculated by estimating the 
similarity between the agent output and those option formats.
\begin{center}
\begin{tcolorbox}[title={Example of CommonsenseQA }]
{
The answer is (A) \texttt{<option 1>} \\
The answer is (B) \texttt{<option 2>} \\
The answer is (C) \texttt{<option 3>} \\
The answer is (D) \texttt{<option 4>} \\
The answer is (E) \texttt{<option 5>} 
}
\label{appendix:prompt_hhrllf_2}
\end{tcolorbox}
\end{center}

\newpage
\section{Algorithm of MA-PoP}
\label{appendix:algorithm}

    \begin{algorithm}[h]
        \caption{Procedure to perform MA-PoP}
        \label{algorithm:PoP_MAD}
        \begin{algorithmic}[1]
            \Procedure{MA-PoP}{$(x, \{y_{o_{j}}\}_{j = 1}^{J}), \{\theta_{m}\}_{m = 1}^{M}, N$}
                \LComment{\(x\): a testing instance (e.g., a question)}
                \LComment{\(y_{o_{j}}\): an option or answer associated with a question}
                \LComment{\(\theta_{m}\): represent an agent indexed by \(m\)}
                \LComment{\(N\): number of Monte Carlo samples (see \cref{eq:approximate_posterior})}
                \For{\(m \in \{1, \dots, M\}\)} \Comment{estimate posterior of each agent}
                    \State \(\Tilde{\Pr}(y | x, d_{m}) \gets\) \Call{Agent-Posterior}{$(x, \{y_{o_{j}}\}_{j = 1}^{J}), \theta_{m}, N$} \Comment{\cref{eq:approximate_posterior}}
                \EndFor
                \State calculate pooled posterior \(\Pr(y | x, d_{1:M}) \gets \prod_{m = 1}^{M} \Tilde{\Pr}(y | x, d_{m})\)
                \State make prediction: \(y^{*} \gets \operatorname*{argmax}_{y \in \{y_{o_{j}}\}_{j = 1}^{J}} \Pr(y | x, d_{1:M})\)
                \State \Return \(y^{*}\)
            \EndProcedure
            \Statex
            \Procedure{Agent-Posterior}{$(x, \{y_{o_{j}}\}_{j = 1}^{J}), \theta, N$}
                \State initialise \(\Tilde{\Pr}(y | x, d_{m}) \gets [0]_{J}\)
                \For{\(n \in \{1, \dots, N\}\)}
                    \State initialise \(\mathbf{p} \gets [0]_{J}\)
                    \State generate agent's response: \(y_{n} \gets \theta(x)\)
                    \For{\(j \in \{1, \dots, J\}\)}
                        \State calculate the similarity-based logit: \(\mathbf{p}_{j} \gets\) \Call{nli-deberta-v3-large}{$y_{o_{j}}, y_{n}$}
                    \EndFor
                    \State \(\Tilde{\Pr}(y | x, d_{m}) \gets \Tilde{\Pr}(y | x, d_{m}) + \operatorname{softmax}(\mathbf{p})\)
                \EndFor
                \State \(\Tilde{\Pr}(y | x, d_{m}) \gets \nicefrac{\Tilde{\Pr}(y | x, d_{m})}{N} \)
                \State \Return \(\Tilde{\Pr}(y | x, d_{m})\)
            \EndProcedure
        \end{algorithmic}
    \end{algorithm}

\newpage
\section{The gap among the best single model/ SOTA MAD, aggregation methods and MA-PoP}
\label{appendix:performance_gap}

\cref{fig:comparison} shows the gap among the best single model/ SOTA MAD, majority voting and MA-PoP. Empirical results demonstrate multi-agent methods consistently improve performance over single-agent baselines across most datasets.

\begin{figure}[th]
    \centering
    \begin{tikzpicture}
    \begin{axis}[
        height = 0.5\linewidth,
        width = 1.\linewidth,
        ybar=1pt,
        bar width=8pt,
        axis x line=bottom,
        axis y line=left,
        ymax=0.925,
        ymin=0.55,
        xmin=-0.,
        xmax=5.,
        xtick distance=1,
        ylabel={Accuracy},
        xlabel style={font=\footnotesize},
        ylabel style={font=\footnotesize},
        xticklabel style={font=\footnotesize, align=center},
        yticklabel style={font=\footnotesize},
        xticklabels={dummy, Professional\\Medicine, Formal\\Logic, HellaSwag, CSQA,  HH-RLHF, MedMCQA},
        legend image post style={scale=1},
        legend columns=1,
        legend cell align={left},
        legend pos=north east,
        legend style={draw=none, font={\footnotesize}, xshift=2ex, yshift=1ex, text opacity=1, fill=none},
        enlarge x limits=0.1,
        nodes near coords,
        nodes near coords align={vertical},
        nodes near coords style={font=\scriptsize, rotate=90, anchor=west, /pgf/number format/.cd, fixed zerofill, precision=2},
        scale only axis
    ]
        \addplot[mark=none, draw=none, fill=NavyBlue] coordinates {
            (0, 0.8015)
            (1, 0.5873)
            (2, 0.8267)
            (3, 0.8153)
            (4, 0.5567)
            (5, 0.5767)
        };
        \addlegendentry{Single Agent Best};
        
        \addplot[mark=none, draw=none, fill=ForestGreen] coordinates {
            (0, 0.8750)
            (1, 0.7143)
            (2, 0.8400)
            (3, 0.8633)
            (4, 0.5867)
            (5, 0.6267)
        };
        \addlegendentry{MAD Best};
        
        \addplot[mark=none, draw=none, fill=TealBlue] coordinates {
            (0, 0.8493)
            (1, 0.6190)
            (2, 0.8033)
            (3, 0.8533)
            (4, 0.5633)
            (5, 0.6167)
        };
        \addlegendentry{Majority Vote};

        \addplot[mark=none, draw=none, fill=Purple] coordinates {
            (0, 0.8603)
            (1, 0.7084)
            (2, 0.8233)
            (3, 0.8600)
            (4, 0.5733)
            (5, 0.6167)
        };
        \addlegendentry{Self-Consistency};

        \addplot[mark=none, draw=none, fill=Gray] coordinates {
            (0, 0.8567)
            (1, 0.7201)
            (2, 0.)
            (3, 0.8667)
            (4, 0.)
            (5, 0.)
        };
        \addlegendentry{LLP};
        
        \addplot[mark=none, draw=none, fill=BrickRed] coordinates {
            (0, 0.8640)
            (1, 0.6429)
            (2, 0.8267)
            (3, 0.8600)
            (4, 0.5667)
            (5, 0.6367)
        };
        \addlegendentry{ISP};
        
        \addplot[mark=none, draw=none, fill=BurntOrange] coordinates {
            (0, 0.8787)
            (1, 0.7367)
            (2, 0.8433)
            (3, 0.8800)
            (4, 0.5900)
            (5, 0.6467)
        };
        \addlegendentry{MA-PoP};
    \end{axis}
\end{tikzpicture}
    \caption{Comparison among the best single model, the best SOTA MAD model, majority vote and MA-PoP across all benchmarks.}
    \label{fig:comparison}
\end{figure}

\newpage
\section{Number of LLM agents in heterogeneous setting}
\label{appendix:num_heterogeneous_agents}
\cref{tab:four_agents_PoP} shows results with 4 agents (\texttt{Qwen-7B}, \texttt{Falcon-7B}, \texttt{Gemma-9B} and \texttt{Llama-8B}). \cref{tab:two_agents_PoP} presents results at the extreme setting consisting of only 2 agents (\texttt{Qwen-7B} and \texttt{Llama-8B}), which is a setting where majority voting fundamentally fails when two agents disagree.

\begin{table}[th]
\caption{Results on \textbf{4} heterogeneous LLM agents. Benchmark performances are measured in Accuracy. The results consist of the mean and standard deviations obtained from three experiments.}
\centering
\resizebox{\linewidth}{!}{%
\begin{tabular}{l cccccc}
\toprule
 & \textbf{\shortstack{MMLU\\(Pro.Med.)}} & \textbf{\shortstack{MMLU\\(Form.Log.)}} & \textbf{HellaSwag} & \textbf{CSQA} & \textbf{HH-RLHF} & \textbf{MedMCQA} \\
\midrule
\multicolumn{7}{c}{\textbf{Single-Agent}} \\
\midrule
Qwen-7B  & $0.7868 \pm .01$ & $0.4905 \pm .03$ & $0.7880 \pm .01$ & $0.8153 \pm .01$ & $0.4773 \pm .01$ & $0.5467 \pm .01$\\
Falcon-7B  & $0.7904 \pm .01$  & $0.5873 \pm .01$   & $0.7133 \pm .01$  & $0.8300 \pm .01$  & $0.5033 \pm .01$  & $0.5767 \pm .01$  \\         
Mistral-7B  & $0.6544 \pm .01$  & $0.3730 \pm .01$   & $0.6433 \pm .01$  & $0.6800 \pm .01$  & $0.5567 \pm .01$  & $0.3133 \pm .02$  \\
Llama-8B  & \multicolumn{1}{c}{$0.7441 \pm .01$}  & $0.3794 \pm .02$ & $0.6267 \pm .03$ & $0.6767 \pm .01$ & $0.4440 \pm .02$   & $0.5000 \pm .01$\\ \midrule
\multicolumn{7}{c}{\textbf{Multi-Agent}} \\
\midrule
Decentr. MAD (T=2) & 0.8419  \(\pm\) .01 & 0.6825  \(\pm\) .01  & 0.7600  \(\pm\) .01 & 0.8300  \(\pm\) .00 & 0.5333  \(\pm\) .01 & 0.5933  \(\pm\) .01       \\
Decentr. MAD (T=3) & 0.8603  \(\pm\) .01 & 0.7063 \(\pm\) .01   & 0.7600 \(\pm\) .02  & 0.8533 \(\pm\) .01  & 0.5400 \(\pm\) .02  & 0.6000  \(\pm\) .00       \\
Decentr. MAD (T=5) & 0.8493 \(\pm\) .01  & 0.6905 \(\pm\) .01   & 0.7633  \(\pm\) .02  & 0.8467  \(\pm\) .01  & 0.5633  \(\pm\) .01  & 0.6067  \(\pm\) .01       \\
Centr. MAD (T=2)  & 0.7830  \(\pm\) .01  & 0.5159   \(\pm\) .02  & 0.6667  \(\pm\) .01 & 0.7867  \(\pm\) .01  & 0.5167  \(\pm\) .02  & 0.5600  \(\pm\) .01       \\
Centr. MAD (T=3)  & 0.8162  \(\pm\) .01  & 0.5556   \(\pm\) .01  & 0.6833  \(\pm\) .01  & 0.8100  \(\pm\) .02  & 0.5067  \(\pm\) .01  & 0.5766 \(\pm\) .02          \\
Centr. MAD (T=5)   & 0.8235  \(\pm\) .01  & 0.6032   \(\pm\) .02  & 0.6933  \(\pm\) .03  & 0.8133  \(\pm\) .02  & 0.4933  \(\pm\) .01  & 0.6133 \(\pm\) .02         \\
Sparse MAD (T=2)  & 0.8603  \(\pm\) .01  & 0.6746   \(\pm\) .00  & 0.7433  \(\pm\) .00  & 0.8533  \(\pm\) .01  & 0.5500  \(\pm\) .01  & 0.6133 \(\pm\) .01          \\
Sparse MAD (T=3)  & 0.8676  \(\pm\) .00  & 0.6984   \(\pm\) .00  & 0.7367  \(\pm\) .02  & 0.8400  \(\pm\) .01  & 0.5533  \(\pm\) .02  & 0.5933 \(\pm\) .01          \\
Sparse MAD (T=5)  & 0.8566  \(\pm\) .01  & 0.7063   \(\pm\) .01  & 0.7467  \(\pm\) .01  & 0.8233  \(\pm\) .00  & 0.5767  \(\pm\) .01  & 0.6200 \(\pm\) .01         \\
Free MAD (T=2)    & 0.8459   \(\pm\) .02  & 0.6746  \(\pm\) .02  & 0.7767  \(\pm\) .01  & 0.8433  \(\pm\) .01  & 0.5467  \(\pm\) .01  & 0.6192 \(\pm\) .00              \\
Free MAD (T=3)    & 0.8493   \(\pm\) .01  & 0.6984  \(\pm\) .02  & 0.7767  \(\pm\) .00  & 0.8400  \(\pm\) .01  & 0.5467  \(\pm\) .02  & 0.6201 \(\pm\) .01              \\
Free MAD (T=5)    & 0.8566   \(\pm\) .01  & 0.6984  \(\pm\) .02  & 0.7767  \(\pm\) .01  & 0.8467  \(\pm\) .00  & 0.5400  \(\pm\) .02  & 0.6206 \(\pm\) .01             \\
\midrule
\textbf{Majority Voting} & 0.8456 \(\pm\) .01  & 0.6111 \(\pm\) .02   & 0.7667 \(\pm\) .01  & 0.8133 \(\pm\) .02  & 0.5433 \(\pm\) .01  & 0.5933 \(\pm\) .02       \\
\textbf{Self-Consistency} & 0.8667 \(\pm\) .00   & 0.7101 \(\pm\) .01  & 0.7867 \(\pm\) .00  & 0.8567 \(\pm\) .01  & 0.5700 \(\pm\) .01  & 0.6133 \(\pm\) .02 \\
\textbf{LLP} & 0.8533 \(\pm\) .01   & 0.7063 \(\pm\) .01  & 0.7800 \(\pm\) .01  & 0.8600 \(\pm\) .00  & 0.5300 \(\pm\) .01  & 0.6300 \(\pm\) .01 \\
\textbf{ISP} & 0.8640 \(\pm\) .01   & 0.6270 \(\pm\) .03  & 0.7900 \(\pm\) .01  & 0.8367 \(\pm\) .02  & 0.5667 \(\pm\) .00  & 0.6167 \(\pm\) .01 \\
\midrule
\textbf{MA-PoP}  & \textbf{0.8787 \(\pm\) .01}  & \textbf{0.7301 \(\pm\) .02}   & \textbf{0.8067 \(\pm\) .00}  & \textbf{0.8767 \(\pm\) .01}  & \textbf{0.5867 \(\pm\) .01}  & \textbf{0.6433 \(\pm\) .01}        \\
\bottomrule
\end{tabular}
}
\label{tab:four_agents_PoP}
\end{table}

\begin{table}[th]
\centering
\caption{Results on \textbf{2} heterogeneous LLM agents. Benchmark performances are measured in Accuracy. The results consist of the mean and standard deviations obtained from three experiments.}
\resizebox{\linewidth}{!}{%
\begin{tabular}{l cccccc}
\toprule
 & \textbf{\shortstack{MMLU\\(Pro.Med.)}} & \textbf{\shortstack{MMLU\\(Form.Log.)}} & \textbf{HellaSwag} & \textbf{CSQA} & \textbf{HH-RLHF} & \textbf{MedMCQA}\\ \midrule
 \multicolumn{7}{c}{\textbf{Single-Agent}} \\
\midrule
Qwen-7B  & $0.7868 \pm .01$ & $0.4905 \pm .03$ & $0.7880 \pm .01$ & $0.8153 \pm .01$ & $0.4773 \pm .01$ & $0.5467 \pm .01$\\
Llama-8B  & \multicolumn{1}{c}{$0.7441 \pm .01$}  & $0.3794 \pm .02$ & $0.6267 \pm .03$ & $0.6767 \pm .01$ & $0.4440 \pm .02$   & $0.5000 \pm .01$        \\ \midrule
 \multicolumn{7}{c}{\textbf{Multi-Agent}} \\
\midrule
Decentr. MAD (T=2) & 0.7868 \(\pm\) .01   & 0.5317 \(\pm\) .02  & 0.7333 \(\pm\) .00  & 0.8067 \(\pm\) .01  & 0.5033 \(\pm\) .03  & 0.5933 \(\pm\) .01            \\
Decentr. MAD (T=3) & 0.8088 \(\pm\) .00   & 0.5159 \(\pm\) .02  & 0.6867 \(\pm\) .01  & 0.8033 \(\pm\) .0 1 & 0.5033 \(\pm\) .01  & 0.6000 \(\pm\) .02            \\
Decentr. MAD (T=5) & 0.8272 \(\pm\) .01   & 0.5159 \(\pm\) .01  & 0.7033 \(\pm\) .01  & 0.8133 \(\pm\) .01  & 0.4900 \(\pm\) .02  & 0.5967 \(\pm\) .01            \\
Centr. MAD (T=2)   & 0.8125 \(\pm\) .02   & 0.4286 \(\pm\) .03  & 0.6667 \(\pm\) .01  & 0.7933 \(\pm\) .02  & 0.4800 \(\pm\) .02  & 0.6000 \(\pm\) .01            \\
Centr. MAD (T=3)   & 0.7831 \(\pm\) .01   & 0.4206 \(\pm\) .04  & 0.6633 \(\pm\) .01  & 0.7833 \(\pm\) .03  & 0.4833 \(\pm\) .02  & 0.5933 \(\pm\) .00            \\
Centr. MAD (T=5)   & 0.7978 \(\pm\) .01   & 0.4683 \(\pm\) .02  & 0.6567 \(\pm\) .01  & 0.7967 \(\pm\) .00  & 0.4800 \(\pm\) .02  & 0.5867 \(\pm\) .01            \\
Sparse MAD (T=2)        & 0.8125 \(\pm\) .02   & 0.5000 \(\pm\) .01  & 0.7733 \(\pm\) .00  & 0.8133 \(\pm\) .01  & 0.5200 \(\pm\) .01 & 0.5333 \(\pm\) .02            \\
Sparse MAD (T=3)        & 0.8272 \(\pm\) .01   & 0.5079 \(\pm\) .02  & 0.7433 \(\pm\) .00  & 0.8033 \(\pm\) .02  & 0.5067 \(\pm\) .01  & 0.5433 \(\pm\) .03            \\
Sparse MAD (T=5)        & 0.8199 \(\pm\) .02   & 0.4523 \(\pm\) .01  & 0.7367 \(\pm\) .02  & 0.7933 \(\pm\) .00  & 0.5067 \(\pm\) .01  & 0.5433 \(\pm\) .02            \\
Free MAD (T=2)        & 0.8051 \(\pm\) .03   & 0.6190 \(\pm\) .00  & 0.7233 \(\pm\) .00  & 0.8000 \(\pm\) .02  & 0.5000 \(\pm\) .01  & 0.5993 \(\pm\) .02            \\
Free MAD (T=3)        & 0.8051 \(\pm\) .00   & 0.6270 \(\pm\) .01  & 0.7300 \(\pm\) .02  & 0.7900 \(\pm\) .01  & 0.5067 \(\pm\) .00  & 0.6010 \(\pm\) .02            \\
Free MAD (T=5)        & 0.8125 \(\pm\) .00   & 0.6270 \(\pm\) .01  & 0.7200 \(\pm\) .01  & 0.7900 \(\pm\) .01  & 0.4933 \(\pm\) .01  & 0.5984 \(\pm\) .00            \\
\midrule
\textbf{Majority Voting} & 0.7684 \(\pm\) .02   & 0.4921 \(\pm\) .02  & 0.7333 \(\pm\) .01  & 0.7867 \(\pm\) .03  & 0.5000 \(\pm\) .01  & 0.5867 \(\pm\) .01 \\
\textbf{Self-Consistency} & 0.8346 \(\pm\) .01   & 0.5555 \(\pm\) .02  & 0.7867 \(\pm\) .00  & 0.8433 \(\pm\) .00  & 0.5300 \(\pm\) .01  & 0.5933 \(\pm\) .01 \\
\textbf{LLP} & 0.8382 \(\pm\) .01   & 0.6429 \(\pm\) .00  & 0.7700 \(\pm\) .01  & 0.8367 \(\pm\) .00  & 0.5000 \(\pm\) .01  & 0.6133 \(\pm\) .01 \\
\textbf{ISP} & 0.8346 \(\pm\) .01   & 0.6223 \(\pm\) .00  & 0.7767 \(\pm\) .01  & 0.8500 \(\pm\) .00  & 0.5300 \(\pm\) .01  & 0.6033 \(\pm\) .01 \\
\midrule
\textbf{MA-PoP}  & \textbf{0.8566 \(\pm\) .02} & \textbf{0.6507 \(\pm\) .01}    & \textbf{0.7933 \(\pm\) .01}  & \textbf{0.8633 \(\pm\) .01}  & \textbf{0.5567 \(\pm\) .02}  & \textbf{0.6167 \(\pm\) .01}        \\
\bottomrule
\end{tabular}%
}
\label{tab:two_agents_PoP}
\end{table}

\newpage
\section{Homogeneous pool of LLMs}
\label{appendix:homogeneous_agents}
Following \citep{choi2025debate}, \cref{tab:llama_PoP,tab:qwen_PoP} evaluate MA-PoP in the homogeneous setting (e.g., all 5 agents are instances of the same LLM). In this setting, the pooled posterior in MA-PoP is simplified into a single posterior because all these agents are the same. For MAD and Majority Voting, the reported results are from \citep{choi2025debate}.

\begin{table}[th]
    \centering
    \caption{Results on \textbf{5} homogeneous LLM agents with \texttt{Qwen-7B}. Benchmark performances are measured in Accuracy. The results consist of the mean and standard deviations obtained from three experiments.}
    \resizebox{\linewidth}{!}{%
    \begin{tabular}{l cccccc}
    \toprule
     & \textbf{\shortstack{MMLU\\(Pro.Med.)}} & \textbf{\shortstack{MMLU\\(Form.Log.)}} & \textbf{HellaSwag} & \textbf{CSQA} & \textbf{HH-RLHF} &\textbf{MedMCQA}\\
    \midrule
    \multicolumn{7}{c}{\textbf{Single-Agent}} \\
    \midrule
    Single-agent baseline & $0.7868 \pm .01$ & $0.4905 \pm .03$ & $0.7880 \pm .01$ & $0.8153 \pm .01$ & $0.4773 \pm .01$ & $0.5467 \pm .01$\\
    \midrule
    \multicolumn{7}{c}{\textbf{Multi-Agent}} \\
    \midrule
    Decentr. MAD ($T=2$) & 0.8051 & 0.5556 & 0.8033 & {0.8567} & 0.4967 & 0.5467 \\
    Decentr. MAD ($T=3$) & 0.8051 & 0.5000 & 0.8000 & 0.8500 & 0.5000 & 0.5367 \\
    Decentr. MAD ($T=5$) & 0.8051 & 0.4762 & 0.8000 & 0.8433 & {0.5067} & 0.5333 \\
    Centr. MAD ($T=2$)   & {0.8162} & 0.4762 & 0.8100 & {0.8567} & 0.4667 & 0.5367 \\
    Centr. MAD ($T=3$)   & {0.8162} & 0.4603 & 0.8100 & 0.8500 & 0.4733 & 0.5433 \\
    Centr. MAD ($T=5$)   & 0.8125 & 0.4444 & \textbf{0.8133} & 0.8467 & 0.4833 & 0.5400 \\
    Sparse MAD ($T=2$)        & 0.8051 & 0.4762 & 0.7967 & 0.8367 & 0.4733 & 0.5367 \\
    Sparse MAD ($T=3$)        & {0.8162} & 0.4365 & 0.7967 & 0.8367 & 0.4733 & 0.5400 \\
    Sparse MAD ($T=5$)        & 0.8088 & 0.4365 & 0.7900 & 0.8333 & 0.4833 & 0.5333 \\
    Free MAD ($T=2$)        & 0.8088 & 0.5397 & 0.8000 & 0.8533 & 0.5100 & 0.5500 \\
    Free MAD ($T=3$)        & 0.8088 & 0.5476 & 0.8000 & 0.8433 & 0.5067 & 0.5467 \\
    Free MAD ($T=5$)        & 0.8162 & 0.5476 & 0.8000 & 0.8467 & 0.5100 & 0.5467 \\
    \midrule
    \textbf{Majority Voting}  & 0.8014 & 0.5397 & 0.8033 & 0.8300 & 0.4867 & 0.5500 \\
    \textbf{ISP}  & 0.8014 \(\pm\) .02 & 0.5397 \(\pm\) .03 & 0.8033 \(\pm\) .01 & 0.8300 \(\pm\) .02 & 0.4867 \(\pm\) .00 & 0.5500 \(\pm\) .01 \\
    \midrule
    \textbf{MA-PoP}  & \textbf{0.8199 \(\pm\) .00} & \textbf{0.6429 \(\pm\) .02} & \textbf{0.8133 \(\pm\) .01} & \textbf{0.8767 \(\pm\) .01} & \textbf{0.5266 \(\pm\) .02} & \textbf{0.5667 \(\pm\) .01}\\
    \bottomrule
    \end{tabular}%
    }
    \label{tab:qwen_PoP}
\end{table}

\begin{table}[th]
    \centering
    \caption{Results on \textbf{5} homogeneous LLM agents with \texttt{Llama-8B}. Benchmark performances are measured in Accuracy. The results consist of the mean and standard deviations obtained from three experiments.}
    \resizebox{\linewidth}{!}{%
    \centering
    \begin{tabular}{l cccccc}
    \toprule
     & \textbf{\shortstack{MMLU\\(Pro.Med.)}} & \textbf{\shortstack{MMLU\\(Form.Log.)}} & \textbf{HellaSwag} & \textbf{CSQA} & \textbf{HH-RLHF} &\textbf{MedMCQA} \\
    \midrule
    \multicolumn{7}{c}{\textbf{Single-Agent}} \\
    \midrule
    Single-agent baseline & $0.7441 \pm .01$ & $0.3794 \pm .02$ & $0.6267 \pm .03$ & $0.6767 \pm .01$ & $0.4440 \pm .02$ & $0.5000 \pm .01$\\
    \midrule
    \multicolumn{7}{c}{\textbf{Multi-Agent}} \\
    \midrule
    Decentr. MAD ($T=2$) & 0.7868 & 0.5238 & 0.6767 & 0.7267 & 0.5233 & 0.5667 \\
    Decentr. MAD ($T=3$) & 0.7684 & 0.5000 & 0.6300 & 0.7033 & 0.5267 & 0.5667 \\
    Decentr. MAD ($T=5$) & 0.7463 & 0.5000 & 0.6300 & 0.7000 & 0.5267 & 0.5600 \\
    Centr. MAD ($T=2$)   & 0.6949 & 0.3810 & 0.6000 & 0.7400 & 0.4800 & 0.5900 \\
    Centr. MAD ($T=3$)   & 0.6507 & 0.3730 & 0.6133 & 0.7200 & 0.4900 & 0.5967 \\
    Centr. MAD ($T=5$)   & 0.5846 & 0.3413 & 0.5800 & 0.6967 & 0.4433 & 0.6033 \\
    Sparse MAD ($T=2$)        & 0.8015 & 0.4683 & 0.6767 & 0.7567 & 0.5267 & 0.6033 \\
    Sparse MAD ($T=3$)        & 0.7831 & 0.4206 & 0.6233 & 0.7467 & 0.5333 & 0.5967 \\
    Sparse MAD ($T=5$)        & 0.7868 & 0.4365 & 0.6233 & 0.7233 & \textbf{0.5400} & 0.5833 \\
    Free MAD ($T=2$)        & 0.8199 & 0.5397 & 0.6767 & 0.7767 & 0.5133 & 0.5933 \\
    Free MAD ($T=3$)        & 0.8162 & 0.5476 & 0.6767 & 0.7700 & 0.5100 & 0.5933 \\
    Free MAD ($T=5$)        & 0.8088 & 0.5476 & 0.6800 & 0.7733 & 0.5133 & 0.5900 \\
    \midrule
    \textbf{Majority Voting}  & {0.8199} & 0.5159 & {0.6933} & {0.7800} & 0.4967 & 0.5633 \\ 
    \textbf{ISP}  & 0.8199 \(\pm\) .02 & 0.5159 \(\pm\) .03 & 0.6933 \(\pm\) .00 & {0.7800 \(\pm\) .01} & 0.4967 \(\pm\) .01 & 0.5633 \(\pm\) .02 \\\midrule
    \textbf{MA-PoP}  & \textbf{0.8345 \(\pm\) .01} & \textbf{0.6031 \(\pm\) .02} & \textbf{0.7133 \(\pm\) .01} & \textbf{0.7833 \(\pm\) .01} & {0.5366 \(\pm\) .00} & \textbf{0.6067 \(\pm\) .00} \\
    \bottomrule
    \end{tabular}%
    }
    \label{tab:llama_PoP}
\end{table}

\newpage
\section{Larger and more capable models}
\label{appendix:larger_models}

For the homogeneous setting, \texttt{Qwen-32B} is
used as a base model. As shown in \cref{tab:qwen32_PoP}, the performance of majority voting remains comparable to
that of multi-agent methods. For MAD and Majority Voting, the reported results are from \citep{choi2025debate}. For the heterogeneous setting, \texttt{Qwen2.5-32B-Instruct}~\citep{yang2025qwen3}, \texttt{Falcon-H1-34B-Instruct}~\citep{falconh1}, and \texttt{Gemma-2-27B-Instruct}~\citep{gemma_2024} are used for 3-agent debate.

\begin{table}[th]
    \centering
    \caption{Results on \textbf{5} homogeneous LLM agents with \textbf{Qwen-32B}. Benchmark performances are measured in Accuracy. The results consist of the mean and standard deviations obtained from three experiments.}
    \resizebox{0.55\linewidth}{!}{%
    \begin{tabular}{l cccc}
    \toprule
     & \textbf{\shortstack{MMLU\\(Pro.Med.)}} & \textbf{HellaSwag} & \textbf{MedMCQA} \\
    \midrule
    \multicolumn{4}{c}{\textbf{Single-Agent}} \\
    \midrule
    Single-agent baseline & $0.8897 \pm .01$ & $0.8620 \pm .01$ & $0.6533 \pm .02$ \\
    \midrule
     \multicolumn{4}{c}{\textbf{Multi-Agent}} \\
    \midrule
    Decentr. MAD ($T=2$) & 0.9081 & 0.8633  & 0.6667  \\
    Decentr. MAD ($T=3$) & 0.9118 & 0.8600  & 0.6633  \\
    Decentr. MAD ($T=5$) & 0.9154 & 0.8600  & 0.6633  \\
    Centr. MAD ($T=2$) & 0.9118 & 0.8600  & 0.6500  \\
    Centr. MAD ($T=3$) & 0.9118 & 0.8533  & 0.6433  \\
    Centr. MAD ($T=5$) & 0.9118 & 0.8533  & 0.6400  \\
    Sparse MAD ($T=2$) & 0.8971 & 0.8667  & 0.6767   \\
    Sparse MAD ($T=3$) & 0.8971 & 0.8667  & 0.6700  \\
    Sparse MAD ($T=5$) & 0.9044 & 0.8667  & 0.6700  \\
    Free MAD ($T=2$) & 0.9044 & 0.8733  & 0.6300   \\
    Free MAD ($T=3$) & 0.9081 & 0.8767  & 0.6467  \\
    Free MAD ($T=5$) & 0.9081 & 0.8767  & 0.6400  \\
    \midrule
    \textbf{Majority Voting}  & 0.9118 & 0.8667  & 0.6633   \\
    \textbf{ISP}  & 0.9118 \(\pm\) .01 & 0.8667 \(\pm\) .00 & 0.6633 \(\pm\) .01  \\
    \midrule
    \textbf{MA-PoP}  & \textbf{0.9301 \(\pm\) .01} & \textbf{0.8833 \(\pm\) .01} & \textbf{0.6833 \(\pm\) .01}\\
    \bottomrule
    \end{tabular}%
    }
    \label{tab:qwen32_PoP}
\end{table}

\begin{table}[]
    \centering
    \caption{Results on \textbf{3} heterogeneous LLM agents with \textbf{Qwen-32B}, \textbf{Falcon-34B}, and \textbf{Gemma-27B}. Benchmark performances are measured in Accuracy. The results consist of the mean and standard deviations obtained from three experiments.}
    \resizebox{0.55\linewidth}{!}{
    \begin{tabular}{l cccc}
    \toprule
     & \textbf{\shortstack{MMLU\\(Pro.Med.)}} & \textbf{HellaSwag} & \textbf{MedMCQA} \\
    \midrule
    \multicolumn{4}{c}{\textbf{Single-Agent}} \\
    \midrule
    Qwen-32B & $0.8897 \pm .01$ & $0.8620 \pm .01$ & $0.6533 \pm .02$ \\
    Falcon-34B & $0.9191 \pm .01$ & $0.8867 \pm .01$ & $0.6833 \pm .01$ \\
    Gemma-27B & $0.8493 \pm .02$ & $0.8500 \pm .01$ & $0.6433 \pm .01$ \\
    \midrule
     \multicolumn{4}{c}{\textbf{Multi-Agent}} \\
    \midrule
    Decentr. MAD ($T=2$) & 0.9375 \(\pm\) .01 & 0.8867 \(\pm\) .01  & 0.6867 \(\pm\) .01  \\
    Decentr. MAD ($T=3$) & 0.9338 \(\pm\) .01 & 0.8867 \(\pm\) .00  & 0.6833 \(\pm\) .01  \\
    Decentr. MAD ($T=5$) & 0.9375 \(\pm\) .00 & 0.8900 \(\pm\) .00  & 0.6867 \(\pm\) .01  \\
    Centr. MAD ($T=2$) & 0.9044 \(\pm\) .02 & 0.8800 \(\pm\) .01  & 0.6133 \(\pm\) .03  \\
    Centr. MAD ($T=3$) & 0.9007 \(\pm\) .01 & 0.8767 \(\pm\) .00  & 0.6200 \(\pm\) .03  \\
    Centr. MAD ($T=5$) & 0.9007 \(\pm\) .01 & 0.8767 \(\pm\) .01  & 0.6333 \(\pm\) .02  \\
    Sparse MAD ($T=2$) & 0.9375 \(\pm\) .00 & 0.8800 \(\pm\) .01  & 0.6833 \(\pm\) .01   \\
    Sparse MAD ($T=3$) & 0.9364 \(\pm\) .01 & 0.8833 \(\pm\) .00  & 0.6733 \(\pm\) .02  \\
    Sparse MAD ($T=5$) & 0.9338 \(\pm\) .01 & 0.8800 \(\pm\) .00  & 0.6733 \(\pm\) .00  \\
    Free MAD ($T=2$) & 0.9375 \(\pm\) .00 & 0.9000 \(\pm\) .00  & 0.6633 \(\pm\) .01   \\
    Free MAD ($T=3$) & 0.9449 \(\pm\) .00 & 0.9000 \(\pm\) .01  & 0.6633 \(\pm\) .01  \\
    Free MAD ($T=5$) & 0.9449 \(\pm\) .00 & 0.9033 \(\pm\) .00  & 0.6633 \(\pm\) .01  \\
    \midrule
    \textbf{Majority Voting}  & 0.9338 \(\pm\) .01 & 0.8833 \(\pm\) .00  & 0.6867 \(\pm\) .00   \\
    \textbf{Self-Consistency} & 0.9375 \(\pm\) .01   & 0.8867 \(\pm\) .01  & 0.6933 \(\pm\) .00   \\
    \textbf{LLP} & 0.9338 \(\pm\) .01   & 0.8800 \(\pm\) .00  & 0.6933 \(\pm\) .01   \\
    \textbf{ISP} & 0.9228 \(\pm\) .02   & 0.8933 \(\pm\) .00  & 0.6967 \(\pm\) .01   \\
    \midrule
    \textbf{MA-PoP}  & \textbf{0.9559 \(\pm\) .01} & \textbf{0.9067 \(\pm\) .01} & \textbf{0.7067 \(\pm\) .01}\\
    \bottomrule
    \end{tabular}%
    }
    \label{tab:diff3_PoP}
    \vspace{-2ex}
\end{table}

\clearpage
\section{Calibration of MA-PoP}
\label{appendix:calibration}
We measure the Expected Calibration Error (ECE) and Maximum Calibration Error (MCE) in \cref{tab:calibration} and plot the reliability diagram (calibration curve) in \cref{fig:reliability} across 4 models (\texttt{Qwen-7B}, \texttt{Falcon-7B}, \texttt{Gemma-9B}, and \texttt{Falcon-34B}) on MedMCQA dataset. 

\begin{table}[th]
\centering
\caption{Results on \textbf{4} different LLM agents. Performances are measured in ECE and MCE.}
\resizebox{\linewidth}{!}{
\begin{tabular}{lcccccccc}
\toprule
                        & \multicolumn{2}{c}{Qwen-7B}      & \multicolumn{2}{c}{Falcon-7B}    & \multicolumn{2}{c}{Gemma-9B}     & \multicolumn{2}{c}{Falcon-34B}   \\ \midrule
                        & w. calibration & w/o calibration & w. calibration & w/o calibration & w. calibration & w/o calibration & w. calibration & w/o calibration \\ \midrule
\multicolumn{1}{c}{ECE} & 0.0871         & 0.2740          & 0.0564         & 0.2310          & 0.1069         & 0.2934          & 0.0305         & 0.1973          \\
\multicolumn{1}{c}{MCE} & 0.1282         & 0.5107          & 0.2962         & 0.3992          & 0.2782         & 0.4394          & 0.2431         & 0.3680          \\ \bottomrule
\end{tabular}}
\label{tab:calibration}
\end{table}

\begin{figure}[th]
    \centering
    \begin{subfigure}[b]{0.35\linewidth}
        \centering
        \begin{tikzpicture}
    \begin{axis}[
    height = 0.7\linewidth,
    width = 0.7\linewidth,
    xlabel style={font=\footnotesize},
    ylabel style={font=\footnotesize},
    xticklabel style={font=\footnotesize},
    yticklabel style={font=\footnotesize},
    xlabel={Confidence},
    ylabel={Accuracy},
    xmin=-0.05, 
    xmax=1.05,
    ymin=-0.05, 
    ymax=1.05,
    xtick distance={0.2},
    ytick distance={0.2},
    legend cell align={left},
    legend pos=north west,
    legend columns=1,
    legend style={draw=none, text opacity=1, fill opacity=0.75, xshift=0.em, font=\footnotesize},
    scale only axis
]

\addplot[black, dashed, thin, domain=0:1] {x};
\addlegendentry{perfect calibration}

\addplot[mark=square*, mark options={scale=1., draw=MidnightBlue, fill=MidnightBlue, style=solid}, draw=MidnightBlue, style={solid, thin}] coordinates {
    (0.278, 0.000) 
    (0.365, 0.500) 
    (0.459, 0.526)
    (0.554, 0.655) 
    (0.613, 0.769)
};
\addlegendentry{w. calibration};

\addplot[mark=*, mark options={scale=1.125, draw=BurntOrange, fill=BurntOrange, style=solid}, draw=BurntOrange, style={dashdotdotted, thin}] coordinates {
    (0.364, 0.250)
    (0.452, 0.273)
    (0.570, 0.344)
    (0.659, 0.440)
    (0.767, 0.520)
    (0.861, 0.462)
    (0.980, 0.779)
};
\addlegendentry{w/o calibration};
    
\end{axis}
\end{tikzpicture}
        \caption{Falcon-7B}
        \label{fig:ece_falcon7b}
    \end{subfigure}
    \hspace{2em}
    \begin{subfigure}[b]{0.35\linewidth}
        \centering
        \begin{tikzpicture}
\begin{axis}[
    height = 0.7\linewidth,
    width = 0.7\linewidth,
    xlabel style={font=\footnotesize},
    ylabel style={font=\footnotesize},
    xticklabel style={font=\footnotesize},
    yticklabel style={font=\footnotesize},
    xlabel={Confidence},
    ylabel={Accuracy},
    xmin=-0.05, 
    xmax=1.05,
    ymin=-0.05, 
    ymax=1.05,
    xtick distance={0.2},
    ytick distance={0.2},
    scale only axis
]

\addplot[black, dashed, thin, domain=0:1] {x};

\addplot[mark=square*, mark options={scale=1., draw=MidnightBlue, fill=MidnightBlue, style=solid}, draw=MidnightBlue, style={solid, thin}] coordinates {
    (0.286, 0.167) 
    (0.351, 0.424) 
    (0.447, 0.513)
    (0.556, 0.684) 
    (0.621, 0.667)
};

\addplot[mark=*, mark options={scale=1.125, draw=BurntOrange, fill=BurntOrange, style=solid}, draw=BurntOrange, style={dashdotdotted, thin}] coordinates {
    (0.286, 0.000)
    (0.353, 0.500)
    (0.466, 0.167)
    (0.562, 0.458)
    (0.675, 0.313)
    (0.778, 0.615)
    (0.872, 0.432)
    (0.988, 0.683)
};
    
\end{axis}
\end{tikzpicture}
        \caption{Gemma-9B}
        \label{fig:ece_gemma9b}
    \end{subfigure}

    \vspace{2em}

    \begin{subfigure}[b]{0.35\linewidth}
        \centering
        \begin{tikzpicture}
\begin{axis}[
    height = 0.7\linewidth,
    width = 0.7\linewidth,
    xlabel style={font=\footnotesize},
    ylabel style={font=\footnotesize},
    xticklabel style={font=\footnotesize},
    yticklabel style={font=\footnotesize},
    xlabel={Confidence},
    ylabel={Accuracy},
    xmin=-0.05, 
    xmax=1.05,
    ymin=-0.05, 
    ymax=1.05,
    xtick distance={0.2},
    ytick distance={0.2},
    scale only axis
]

\addplot[black, dashed, thin, domain=0:1] {x};

\addplot[mark=square*, mark options={scale=1., draw=MidnightBlue, fill=MidnightBlue, style=solid}, draw=MidnightBlue, style={solid, thin}] coordinates {
    (0.286, 0.167) 
    (0.351, 0.424) 
    (0.447, 0.513)
    (0.556, 0.684) 
    (0.621, 0.667)
};

\addplot[mark=*, mark options={scale=1.125, draw=BurntOrange, fill=BurntOrange, style=solid}, draw=BurntOrange, style={dashdotdotted, thin}] coordinates {
    (0.367, 0.500)
    (0.467, 0.481)
    (0.568, 0.259)
    (0.684, 0.333)
    (0.759, 0.452)
    (0.877, 0.366)
    (0.988, 0.748)
};
    
\end{axis}
\end{tikzpicture}
        \caption{Qwen-7B}
        \label{fig:ece_qwen7b}
    \end{subfigure}
    \hspace{2em}
    \begin{subfigure}[b]{0.35\linewidth}
        \centering
        \begin{tikzpicture}
\begin{axis}[
    height = 0.7\linewidth,
    width = 0.7\linewidth,
    xlabel style={font=\footnotesize},
    ylabel style={font=\footnotesize},
    xticklabel style={font=\footnotesize},
    yticklabel style={font=\footnotesize},
    xlabel={Confidence},
    ylabel={Accuracy},
    xmin=-0.05, 
    xmax=1.05,
    ymin=-0.05, 
    ymax=1.05,
    xtick distance={0.2},
    ytick distance={0.2},
    scale only axis
]

\addplot[black, dashed, thin, domain=0:1] {x};

\addplot[mark=square*, mark options={scale=1., draw=MidnightBlue, fill=MidnightBlue, style=solid}, draw=MidnightBlue, style={solid, thin}] coordinates {
    (0.340, 0.583)
    (0.472, 0.385)
    (0.552, 0.500)
    (0.652, 0.679)
    (0.755, 0.748)
    (0.822, 0.815)
};

\addplot[mark=*, mark options={scale=1.125, draw=BurntOrange, fill=BurntOrange, style=solid}, draw=BurntOrange, style={dashdotdotted, thin}] coordinates {
    (0.368, 0.000)
    (0.459, 0.400)
    (0.582, 0.542)
    (0.626, 0.375)
    (0.787, 0.419)
    (0.858, 0.636)
    (0.984, 0.796)
};
    
\end{axis}
\end{tikzpicture}
        \caption{Falcon-34B}
        \label{fig:ece_falcon34b}
    \end{subfigure}
    
    \caption{Reliability diagrams of MA-PoP with and without calibration module across four models on MedMCQA dataset.}
    \label{fig:reliability}
\end{figure}

\clearpage

\section{Ablation results of MA-PoP}
\label{appendix:ablation}

We conduct three ablations to isolate the contribution of each component of MA-PoP in \cref{tab:ablation_nli}: (1) w/o PoP: a single best agent whose posterior is estimated by averaging calibrated NLI scores across \(N=5\) Monte Carlo samples (\cref{eq:approximate_posterior}), without the cross-agent product in \cref{eq:PoP}.
(2) w/o calibration: MA-PoP using softmax-normalised NLI logits in place of the calibrated posterior estimate. (3) Temperature scaling: MA-PoP in which the Deep Sets calibration head is replaced by temperature scaling~\citep{guo2017calibration,platt1999probabilistic}, which learns a single scalar temperature to rescale the logits via softmax. 
Overall, these results indicate that pooling and calibration are complementary, with pooling providing substantial additional gains and calibration being necessary to realise them.
The w/o calibration condition shows that multiplying uncalibrated scores yields only marginal gains, confirming that well-formed probabilities are a prerequisite for multiplicative aggregation. Temperature scaling improves performance on some benchmarks (e.g., MMLU Pro. Med., CSQA) but underperforms on others (HellaSwag, MMLU Form. Log.), and consistently falls short of the full MA-PoP. This suggests that a single scalar parameter is insufficient to capture the permutation-equivariant structure of multiple-choice options. These results indicate that calibration and posterior pooling are complementary, with pooling providing the larger contribution and calibration being necessary to realise it.
\begin{table}[th]
    \centering
    \caption{Ablation results for MA-PoP in the heterogeneous setting. Single-agent Best denotes the best results of single agent baseline. The results consist of the mean and standard deviations obtained from three different random seeds. }
    \resizebox{\linewidth}{!}{%
    \begin{tabular}{l ccccccc}
    \toprule
     & \textbf{\shortstack{MMLU\\(Pro.Med.)}} & \textbf{\shortstack{MMLU\\(Form.Log.)}} & \textbf{HellaSwag} & \textbf{CSQA} & \textbf{HH-RLHF} & \textbf{MedMCQA}\\
    \midrule
    \multicolumn{7}{c}{\textbf{Single-Agent}} \\
    \midrule
    Single-agent Best & $0.8015 \pm .01$ & $0.5873 \pm .01$ & $0.8267 \pm .01$ & $0.8300 \pm .01$ & $0.5567 \pm .01$ & $0.5767 \pm .01$\\
    \midrule
     \multicolumn{7}{c}{\textbf{Multi-Agent}} \\
    \midrule
    \textbf{MA-PoP (w/o calibration)}  & 0.8567 \(\pm\) .02  & 0.6387 \(\pm\) .02 & 0.8333 \(\pm\) .01 & 0.8500 \(\pm\) .01 & 0.5567 \(\pm\) .01 & 0.6100 \(\pm\) .01\\
    \textbf{MA-PoP (w/o PoP)}  & 0.8235 \(\pm\) .01 & 0.6847 \(\pm\) .02 & 0.8233 \(\pm\) .01 & 0.8167 \(\pm\) .02 & 0.5567 \(\pm\) .01 & 0.5933 \(\pm\) .01\\
    \midrule
    \textbf{MA-PoP (w. temperature)}  & 0.8713 \(\pm\) .01 & 0.6349 \(\pm\) .02 & 0.7900 \(\pm\) .02 & 0.8633 \(\pm\) .00 & 0.5567 \(\pm\) .01 & 0.6133 \(\pm\) .01\\
    \midrule
    \textbf{MA-PoP}  & \textbf{0.8787 $\pm$ .01}  & \textbf{0.7367 \(\pm\) .01}   & \textbf{0.8433 \(\pm\) .01}  & \textbf{0.8800 \(\pm\) .00}  & \textbf{0.5900 \(\pm\) .01}  & \textbf{0.6467 \(\pm\) .02}       \\
    \bottomrule
    \end{tabular}%
    }
    \label{tab:ablation_nli}
\end{table}

\clearpage
\section{Number of Monte Carlo samples}
\label{appendix:num_samples}
We analyse the impact of the number of Monte Carlo samples, denoted by \(N\), on the quality of posterior estimation in MA-PoP. The results in \cref{fig:numofsamples} for the homogeneous setting show substantial improvement when increasing from \(N = 1\) to \(N = 5\) samples, with performance stabilising beyond \(N \ge 5\).
\begin{figure}[th]
    \centering
    \begin{tikzpicture}
    \begin{axis}[
        height = 0.375\linewidth,
        width = 0.5\linewidth,
        xlabel={Number of Sample times \(N\)},
        ylabel={Accuracy},
        xlabel style={font=\footnotesize},
        ylabel style={font=\footnotesize, yshift=-0.375em},
        xticklabel style={font=\footnotesize},
        yticklabel style={font=\footnotesize},
        xtick distance={2},
        xmin=0.5,
        xmax=10.5,
        legend image post style={scale=0.75},
        legend cell align={left},
        legend pos=south west,
        legend columns=1,
        legend style={draw=none, text opacity=1, fill opacity=0.75, yshift=6ex, xshift=0.em, font=\footnotesize},
        scale only axis
    ]

        \addplot[mark=square*, mark options={scale=1., draw=MidnightBlue, fill=MidnightBlue, style=solid}, draw=MidnightBlue, style={dashed, thin}] coordinates {
            (1, 0.8456)
            (3, 0.8750)
            (5, 0.8800)
            (10, 0.8787)
        };
	   \addlegendentry{Pro. Medicine};

        \addplot[mark=pentagon*, mark options={scale=1.25, draw=Purple, fill=Purple, style=solid}, draw=Purple, style={dashdotdotted, thin}] coordinates {
            (1, 0.8167)
            (3, 0.8267)
            (5, 0.8433)
            (10, 0.8467)
        };
        \addlegendentry{Hellaswag};

        \addplot[mark=*, mark options={scale=1.25, draw=BurntOrange, fill=BurntOrange, style=solid}, draw=BurntOrange, style={solid, thin}] coordinates {
            (1, 0.6167)
            (3, 0.6133)
            (5, 0.6367)
            (10, 0.6333)
        };
	   \addlegendentry{MedMCQA};

    \end{axis}
\end{tikzpicture}
    \vspace{-1em}
    \caption{Comparison of different number of sample times \(M\) for homogeneous LLM agents.}
    \label{fig:numofsamples}
\end{figure}

\newpage
\section{Computational Efficiency Analysis}
\label{appendix:computational_efficiency_analysis}

We evaluate the computational cost by measuring input, output, and the total number of tokens consumed per sample across all methods on the MedMCQA dataset in the heterogeneous setting of 5 agents. \cref{tab:token_comparison} presents token usage statistics accumulated across all debate rounds. MA-PoP demonstrates competitive efficiency compared to MAD approaches. The running time of NLI cross-encoder is less than 1 second per sample, which is negligible compared to the time needed for the whole inference, which is about 25 seconds.

\begin{table}[th]
\centering
\caption{Token usage per sample on MedMCQA with 5 heterogeneous agents. Token counts are cumulative across all rounds. }
\label{tab:token_comparison}
\resizebox{0.8\linewidth}{!}{
\begin{tabular}{l r r r}
    \toprule
     & \textbf{Input Tokens} & \textbf{Output Tokens} & \textbf{Total Tokens} \\
     \midrule
     \multicolumn{4}{c}{\textbf{Multi-Agent}} \\
    \midrule
    Decentr. MAD ($T=2$) & 5,621  & 1,381 & 7,002  \\
    Decentr. MAD ($T=3$) & 7,876  & 1,760 & 9,636  \\
    Decentr. MAD ($T=5$) & 12,248 & 2,529 & 14,777 \\
    Centr. MAD ($T=2$) & 6,250 & 2,283  & 8,532  \\
    Centr. MAD ($T=3$) & 8,885 & 3,002  & 11,887  \\
    Centr. MAD ($T=5$) & 14,286 & 4,536  & 18,822   \\
    Sparse MAD ($T=2$) & 4,807 & 1,579  & 6,386   \\
    Sparse MAD ($T=3$) & 6,996 & 2,134  & 9,129  \\
    Sparse MAD ($T=5$) & 11,450 & 3,137  & 14,587  \\
    Free MAD ($T=2$) & 10,105 & 2,282  & 12,387   \\
    Free MAD ($T=3$) & 14,571 & 2,973  & 17,544  \\
    Free MAD ($T=5$) & 18,692 & 3,403  & 23,095  \\
    \midrule
    \textbf{Majority Voting}  & 433 & 518  & 951   \\
    \textbf{Self-Consistency}  & 2,166 & 2,687  & 4,853   \\
    \textbf{LLP}  & 2,166 & 2,703  & 4,869   \\
    \textbf{ISP}  & 2,166       & 2,685        & 4,851   \\
    \midrule
    \textbf{MA-PoP}  & 2,166       & 2,696        & 4,862\\
    \bottomrule
    \end{tabular}%
    }
    
\end{table}

\clearpage
\section{Extend related work}
\label{sec:extended_related_work}

\paragraph{Multi-agent decision making} dates back to the 1970s, originating in \emph{decision support systems}, and has since evolved into complex multi-agent systems~\citep{keen1978decision}. The literature spans a diverse range of techniques, including rule-based systems~\citep{zhang2014leader, yan2023game}, game-theoretic methods~\citep{lanctot2017unified, wang2022cooperative}, reinforcement learning~\citep{sunehag2018value, vinyals2019grandmaster, son2019qtran, rashid2020monotonic}, and, more recently, LLM-based agents, each with distinct trade-offs. Rule-based methods remain interpretable but brittle and hard to scale or optimise. Game‑theoretic approaches offer theoretical guarantees at equilibrium, but their integration into real‑world pipelines is hindered by impractical modelling assumptions (e.g., requiring an infinite number of rounds to reach equilibrium). Reinforcement learning has delivered remarkable results, but training is data- and compute-intensive, and performance largely depends on reward specification. LLM‑based agents demonstrate strong performance due to their reasoning capabilities; however, current decision‑making aggregation remains ad‑hoc, typically relying on majority voting or debate and lacking formal guarantees on decision quality.
In contrast, we ground multi-agent decision making in Blackwell's informativeness framework to generate decisions with information-theoretic guarantees.

\paragraph{LLM uncertainty estimation} seeks to quantify an LLM agent's probabilistic belief in a given context. A straightforward approach is to elicit verbalised confidence through natural-language prompts that explicitly ask the model to report its confidence level~\citep{lin2022teaching, zheng2023ddcot, tian2023just, xiong2024can}. However, such self-reported confidence often fails to correspond to the model's true underlying belief state~\citep{kadavath2022language}. An alternative line of work employs sample-based estimation, where uncertainty is inferred from variability across multiple stochastic model outputs~\citep{lin2024generating}. The practical instantiation of our method, presented in \cref{sec:method}, follows this sample-based direction.

\paragraph{Product of Experts} (PoE)~\citep{hinton2002training} can be seen as an inspiration to our work, but it is fundamentally different from MA-PoP. In particular, PoE is primarily a modelling and training paradigm that combines multiple expert likelihoods without offering decision-theoretic optimality guarantees. 
In contrast, MA-PoP is derived from a decision-theoretic perspective: it approximates Bayesian posterior pooling over agents' private information and is Blackwell-optimal in the independent-private-information limit. In realistic LLM settings, where agents may be correlated, MA-PoP should instead be viewed as an approximation to posterior-level evidence aggregation.

\clearpage
\section{Limitations and Future Works}
\label{appendix:limitations}

Despite promising empirical results, our methodology has an intrinsic limitation: it relies on the assumption of conditional independence among agents' private observations or private training datasets (see \cref{eq:PoP}). In practice, this assumption implicitly requires sufficient diversity of agents' private information sources. When diversity is present, conditional independence serves as a tractable approximation that facilitates posterior factorisation and enables a practical estimator of the pooled posterior.

The conditional independence assumption is, however, unlikely to hold in practical LLM deployments. Agents' private knowledge, spanning pre-training data, fine-tuning procedures, and toolchains, typically overlaps. This leads to overconfidence when using PoP in \cref{eq:PoP} and, in turn, misleads downstream decision-makers. 
As future work to mitigate this limitation, we propose leveraging open-source and open-weight models to systematically characterise inter-agent dependencies and quantify their impact on pooled posterior.

\end{document}